\documentclass[10pt]{article} 
\usepackage[accepted]{tmlr}



\usepackage{booktabs}       
\usepackage{amsfonts}       
\usepackage{nicefrac}       
\usepackage{microtype}      
\usepackage{xcolor}         

\usepackage{hyperref}       
\usepackage{array}          
\usepackage{tikz}
\usetikzlibrary{calc,positioning,decorations.pathreplacing,shapes}
\usepackage{amsmath}
\usepackage{amsthm}
\usepackage{amssymb}
\PassOptionsToPackage{normalem}{ulem}
\usepackage{ulem}
\usepackage{dsfont}
\usepackage[noabbrev, capitalise]{cleveref}

\usepackage[hide=false,setmargin=true,marginparwidth=0.8in]{marginalia}

\usepackage{enumitem}
\usepackage{xspace}
\usepackage{graphicx}
\usepackage{subcaption}







\theoremstyle{plain}
\newtheorem{theorem}{\protect\theoremname}[section]
  \theoremstyle{plain}
  \newtheorem{proposition}[theorem]{\protect\propositionname}
  \theoremstyle{plain}
  
  \theoremstyle{remark}
  \newtheorem{remark}[theorem]{\protect\remarkname}
  \theoremstyle{plain}
  
  \theoremstyle{plain}
  \newtheorem{lemma}[theorem]{\protect\lemmaname}
  \theoremstyle{plain}

\usepackage{mathtools}

\usepackage{contour}
\usepackage[normalem]{ulem}




\crefname{assumption}{Assumption}{Assumption}

\crefname{example}{Example}{Example}
\newtheorem{definition}[theorem]{Definition}
\crefname{definition}{Definition}{Definition}

\makeatother
\crefformat{equation}{(#2#1#3)}

\numberwithin{equation}{section}

\contourlength{0.8pt}

\renewcommand{\underline}[1]{%
  \uline{\phantom{#1}}%
  \llap{\contour{white}{#1}}%
}





\definecolor{OliveGreen}{rgb}{0,0.6,0}
\definecolor{JaumeBlue}{rgb}{0,0,0.6}
\definecolor{EditBlue}{rgb}{0,0,0}

\usepackage{babel}
  \providecommand{\conjecturename}{Conjecture}
  \providecommand{\corollaryname}{Corollary}
  \providecommand{\lemmaname}{Lemma}
  \providecommand{\propositionname}{Proposition}
  \providecommand{\remarkname}{Remark}
\providecommand{\theoremname}{Theorem}

%


\global\long\def\bN{\mathbb{N}}

\global\long\def\bR{\mathbb{R}}

\global\long\def\cN{\mathcal{N}}

\global\long\def\vp{\varphi}

\global\long\def\defequal{\coloneqq}

\global\long\def\di{\partial}

\global\long\def\nin{{n_\text{in}}}
\global\long\def\nout{{n_\text{out}}}
\global\long\def\zout{z_{\text{out}}}
\global\long\def\Win{W_{\text{in}}}
\global\long\def\Wout{W_{\text{out}}}

\DeclareMathOperator\Tr{Tr}
\DeclareMathOperator\SPD{SPD}
\DeclareMathOperator\Sym{Sym}



\title{Differential Equation Scaling Limits of \\
Shaped and Unshaped Neural Networks}


\author{\name Mufan (Bill) Li \email mufan.li@princeton.edu \\
      \addr 
      Princeton University 
      \AND
      \name Mihai Nica \email nicam@uoguelph.ca \\
      \addr University of Guelph and Vector Institute
      }



\begin{document}

\maketitle

\begin{abstract}
Recent analyses of neural networks with \emph{shaped} activations (i.e. the activation function is scaled as the network size grows) have led to scaling limits described by differential equations. 
However, these results do not a priori tell us anything about ``ordinary'' unshaped networks, where the activation is unchanged as the network size grows. 
In this article, we find similar differential equation based asymptotic characterization for two types of unshaped networks. 
\begin{itemize}
    \item Firstly, we show that the following two architectures converge to the same infinite-depth-and-width limit at initialization: 
    (i) a fully connected ResNet with a $d^{-1/2}$ factor on the residual branch, where $d$ is the network depth.
    (ii) a multilayer perceptron (MLP) with depth $d \ll$ width $n$ and shaped ReLU activation at rate $d^{-1/2}$. 
    \item Secondly, for an unshaped MLP at initialization, we derive the first order asymptotic correction to the layerwise correlation. 
    In particular, if $\rho_\ell$ is the correlation at layer $\ell$, then $q_t = \ell^2 (1 - \rho_\ell)$ with $t = \frac{\ell}{n}$ converges to an SDE with a singularity at $t=0$. 
    %
    %
\end{itemize}
These results together provide a connection between shaped and unshaped network architectures, and opens up the possibility of studying the effect of normalization methods and how it connects with shaping activation functions. 
\end{abstract}

\section{Introduction}

%
\citet{martens2021rapid,zhang2022deep} proposed transforming the activation function to be more linear as the neural network becomes larger in size, which significantly improved the speed of training deep networks without batch normalization. 
Based on the infinite-depth-and-width limit analysis of \cite{li2022neural}, the principle of these transformations can be roughly summarized as follows: choose an activation function $\varphi_s : \mathbb{R} \to \mathbb{R}$ as a perturbation of the identity map depending on the network width $n$ (or depth $d = n^{2p}, p > 0$)
\begin{equation}
    \varphi_s(x) = x + \frac{1}{n^p} h(x) + O(n^{-2p}) 
    = x + \frac{1}{\sqrt{d}} h(x) + O( d^{-1} )\,, 
\end{equation}
where for simplicity we will ignore the higher order terms for now. 
\citet{li2022neural} also showed the limiting multilayer perceptron (MLP) can be described by a Neural Covariance stochastic differential equation (SDE). 
Furthermore, it appears the choice of $p = \frac{1}{2}$ is necessary to reach a non-degenerate nor trivial limit, at least when the depth-to-width ratio $\frac{d}{n}$ converges to a positive constant (see \cite[Proposition 3.4]{li2022neural}).

Recently, \citet{hayou2023width,cirone2023neural} also studied the infinite-depth-and-width limit of a specific ResNet architecture \citep{he2016deep}. 
Most interestingly, the limit is described by an ordinary differential equation (ODE) very similar to the neural covariance SDE. 
Furthermore, \cite{hayou2023width} showed the width and depth limits commute, i.e. no dependence on the depth-to-width ratio $\frac{d}{n}$. 
It is then natural to consider a more careful comparison and understanding of the two differential equations. 

At the same time, \citet{li2022neural} demonstrated the unshaped network also has an accurate approximation via a Markov chain. 
\citet{jakub2023depth} further studied the large width asymptotics of the Markov chain updates, where the transition kernel still depends on the width. 
Since the Markov chain quickly converges to a fixed point, it does not immediately appear to have a scaling limit. 
However, this motivates us to consider a modified scaling around the fixed point, so that we can recover a first order asymptotic correction term. 

In this note, we provide two technical results that address both of the above questions. 
Both of the results are achieved by considering a modification of the scaling, which leads to the following results. 
\begin{itemize}[itemsep=0em]
    \item Firstly, we demonstrate that shaping the activation has a close connection to ResNets, and the covariance ODE is in fact just the deterministic drift component of the covariance SDE. 
    Furthermore, in the limit as the scaled ratio of $\frac{d}{n^{2p}}$ to converges to a positive constant and $p \in (0, \frac{1}{2})$, the shaped MLP covariance also converges to the same ODE. 
    \item Secondly, we analyze the correlation of an \emph{unshaped} MLP, providing a derivation of the first order asymptotic correction. 
    The correction term arises from rescaling the correlation $\rho_\ell$ in layer $\ell$ by $q_\ell = \ell^2 (1 - \rho_\ell)$, and we show it is closely approximated by an SDE. 
\end{itemize}

\begin{figure}[t]
\centering
\includegraphics[width=0.45\textwidth]{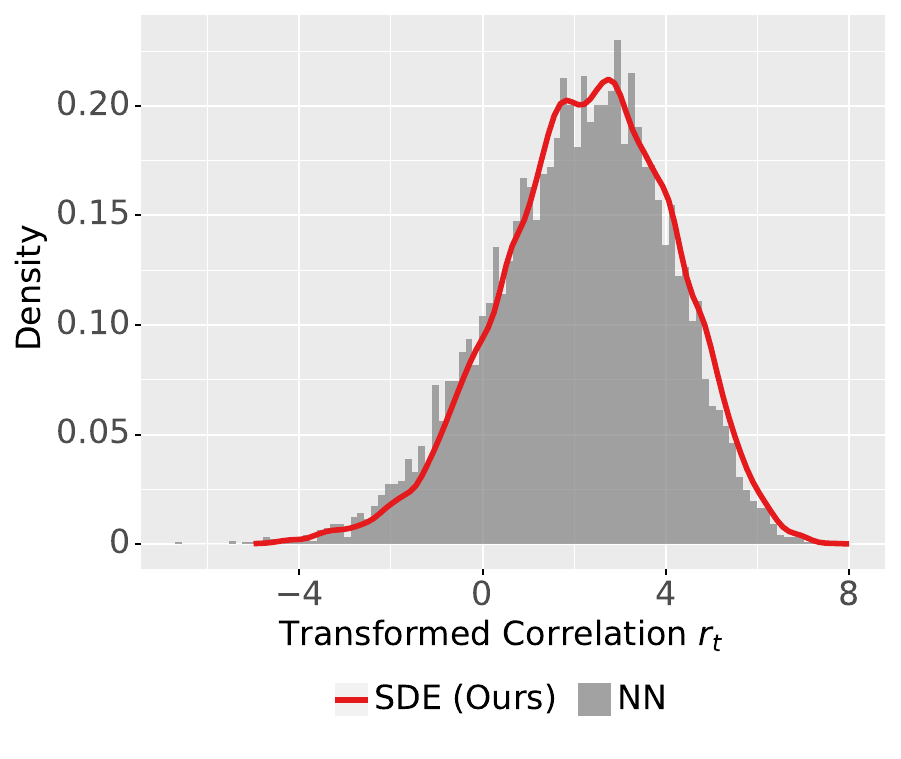}
\caption{
Empirical distribution of the transformed correlation $r_t = \log( \ell^2( 1 - \rho_\ell ) )$ for an unshaped ReLU MLP, SDE sample density computed via kernel density estimation. 
%
%
Simulated with $n = d = 150,  \rho_0 = 0.3, r_0 = \log(1 - \rho_0) = \log(0.7)$, SDE step size $10^{-2}$, and $2^{13}$ samples. 
}
\label{fig:rt_density}
\end{figure}

The rest of this article is organized as follows. 
Firstly, we will provide a brief literature review in the rest of this section. 
Next, we will review the most relevant known results on this covariance SDEs and ODEs in \cref{sec:existing_results}. 
Then in \cref{sec:alt_shape_limit}, we will make the connection between shaping and ResNets precise. 
At the same time, we will also provide a derivation of the unshaped regime in \cref{sec:unshaped_sde}, where we show that by modifying the scaling yet again, we can recover another SDE related to the correlation of a ReLU MLP.

\subsection{Related Work}

On a conceptual level, the main difficulty of \textcolor{EditBlue}{analyzing} neural networks is due to the lack of mathematical tractability. 
%
%
In a seminal work, \citet{neal1995bayesian} showed that two layer neural networks at initialization converges to a Gaussian process. 
Beyond the result itself, the conceptual breakthrough opened up the field to analyzing large size asymptotics of neural networks. 
In particular, this led to a large body of work on large or infinite width neural networks \citep{lee2018deep,jacot2018neural,du2019gradient,mei2018mean,SS18,yang2019tensor,bartlett2021deep}. 
However, majority of these results relied on the network converging to a kernel limit, which are known to perform worse than neural networks \citep{ghorbani2020neural}. 
The gap in performance is believed to be primarily due to a lack of feature learning \citep{yang2021featurelearning,abbe2022merged,ba2022high}. 
While this motivated the study of several alternative scaling limits, in this work we are mostly interested in the infinite-depth-and-width limit. 

First investigated by \citet{hanin2019products}, it was shown that not only does this regime not converge to a Gaussian process at initialization, it also learns features \cite{hanin2019finite}. 
This limit has since been analyzed with transform based methods \citep{noci2021precise} and central limit theorem approaches \citep{li2021future}. 
As we will describe in more detail soon, the result of most interest is the covariance SDE limit of \cite{li2022neural}. 
The MLP results were also further extended to the transformer setting \citep{noci2023shaped}. 

The $d^{-1/2}$ scaling for ResNets was first considered by \cite{hayou2021stable}, with the depth limit carefully studied afterwards \citep{hayou2022infinite,hayou2023width,hayou2023commutative}. 
\cite{fischer2023optimal} also arrived at the same scaling through a different theoretical approach. 
This scaling has found applications for hyperparameter tuning \citep{bordelon2023depthwise,yang2023tensor} when used in conjunction with the $\mu$P scaling \citep{yang2021featurelearning}. 

Batch and layer normalization methods were introduced as a remedy for unstable training \citep{ioffe2015batch,ba2016layer}, albeit theoretical analyses of these highly discrete changes per layer has been challenging. 
A recent promising approach studies the isometry gap, and shows that batch normalization methods achieves a similar effect as shaping activation functions
\citep{meterez2023training}. 
Theoretical connections between these approaches using a differential equation based description remains an open problem.

\section{\textcolor{EditBlue}{Background} on Shaped Networks and ResNets}
\label{sec:existing_results}

\textcolor{EditBlue}{
\begin{table}[t]\label{tab:notations}
\begin{tikzpicture}
\node[draw,anchor=north east,inner sep=0,outer sep=0,inner sep=3pt] (tab) at (0,0) {
\small\centering
\begin{tabular}{>{\raggedright}p{0.12\textwidth}>{\raggedright}p{0.245\textwidth}>{\raggedright}p{0.145\textwidth}>{\raggedright}p{0.38\textwidth}}
\textbf{Notation} & \textbf{Description} & \textbf{Notation} & \textbf{Description}\tabularnewline[2pt]
$\nin \in \bN$ & Input dimension & $\nout \in \bN$ & Output dimension \tabularnewline
$n \in \bN $ & Hidden layer width & $d \in \bN$ & Number of hidden layers (depth) \tabularnewline
$\vp(\cdot)$ & Base activation 
&
$\vp_{s}(\cdot)$ & Shaped activation
\tabularnewline
%
$x^\alpha \in \bR^\nin$ & Input for $1\leq \alpha \leq m$  & $W_0 \in \bR^{\nin \times n}$ & Weight matrix at layer 0
\tabularnewline
$\zout^\alpha \in \bR^\nout$ & Network output & \rlap{\smash{$\Wout \in \bR^{n \times \nout}$}} & Weight matrix at final layer 
\tabularnewline[-1.2em]
$z_\ell^\alpha \in \bR^n$ & Neurons (pre-activation) \\ \ \ for layer $1\leq \ell \leq d$
  &  $W_\ell \in \bR^{n \times n}$ & Weight matrix at layer $1 \leq \ell \leq d$\\ \ \ \textbf{All weights initialized iid}
$\sim \cN(0,1)$
\tabularnewline
$\vp_\ell^\alpha \in \bR^n$ & Neurons (post-activation) 
\\ \ \ for layer $1\leq \ell \leq d$
& $c \in \bR$ & Normalizing constant 
\\ \ \ $c \defequal \left( \mathbb{E} \, \varphi(g)^2 \right)^{-1}$ for $g\sim \mathcal{N}(0,1)$
\tabularnewline
$V^{\alpha\beta}_\ell \in \mathbb{R}$ & Covariance $\frac{c}{n} \langle \varphi^\alpha_\ell, \varphi^\beta_\ell\rangle$
& $\rho^{\alpha\beta}_\ell \in [-1,1]$ & Correlation $V^{\alpha\beta}_\ell / \sqrt{ V^{\alpha\alpha}_\ell V^{\beta\beta}_\ell }$
\end{tabular}
};
\node[draw,align=right,anchor=north east,fill=white,outer sep=0] (tabcap) at (0,0) {\scalebox{0.85}{Table 1: Notation}};
\end{tikzpicture}
\end{table}
}

Let $\{x^\alpha\}_{\alpha=1}^m$ be a set of input data points in $\mathbb{R}^{\nin}$, and let $z^\alpha_\ell \in \mathbb{R}^n$ denote the $\ell$-th hidden layer with respect to the input $x^\alpha$. 
We consider the standard width-$n$ depth-$d$ MLP architecture with He-initialization \citep{he2015delving} defined by the following recursion 
\begin{equation}
\label{eq:mlp_defn}
\begin{aligned}
    \zout^\alpha = \sqrt{\frac{c}{n}} \Wout \, \varphi(z_d^\alpha) \,, \quad
    z_{\ell+1}^\alpha = \sqrt{\frac{c}{n}} W_\ell \,\varphi_s(z_\ell^\alpha) \,, \quad 
    z_1^\alpha = \frac{1}{\sqrt{\nin}} \Win \, x^\alpha \,, 
\end{aligned}
\end{equation}
where $\varphi_s:\mathbb{R}\to\mathbb{R}$ is \textcolor{EditBlue}{the activation function to be specified}, $c^{-1} = \mathbb{E} \, \varphi_s(g)^2$ for $g\sim N(0,1)$, $z_\ell^\alpha \in \mathbb{R}^n, \zout^\alpha \in \mathbb{R}^{\nout}$, and the matrices $\Wout \in \mathbb{R}^{\nout \times n}, W_\ell \in \mathbb{R}^{n\times n}, \Win \in \mathbb{R}^{n \times \nin}$ are initialized with iid $N(0,1)$ entries. 

\textcolor{EditBlue}{
The main structure used to study neural networks at initialization, such as the neural network Gaussian processes (NNGP) \citep{neal1995bayesian,lee2018deep}, is on the conditional Gaussian property. 
More precisely, if we condition on the previous layers $\mathcal{F}_\ell = \sigma( (z^\alpha_k)_{\alpha \in [m], k \leq \ell} )$, we have that 
\begin{equation}
    [ z^{\alpha}_{\ell+1} ]_{\alpha=1}^m 
    | \mathcal{F}_\ell 
    \overset{d}{=} 
    \mathcal{N}\left( 0, 
    \frac{c}{n} [\langle \varphi^\alpha_\ell, \varphi^\beta_\ell \rangle ]_{\alpha,\beta=1}^m
    \otimes I_n \right) \,, 
\end{equation}
where we use the notation $[\cdot]_{\alpha=1}^m$ to vertically stack vectors, let $\varphi^\alpha_\ell = \varphi_s( z^\alpha_\ell )$ be the post activation hidden layer, and let $\otimes$ be the Kronecker product. 
}

\textcolor{EditBlue}{
This naturally leads us to define the covariance matrix as $V_\ell := \frac{c}{n} [ \langle \varphi^\alpha_\ell, \varphi^\beta_\ell \rangle ]_{\alpha,\beta=1}^m$. 
The NNGP results essentially reduces down to applying the Law of Large Numbers inductively to show the covariance $V^{\alpha\beta}_\ell$ converges to its expect value. 
More precisely, if $[z_\ell^\alpha]_{\alpha=1}^m \sim \mathcal{N}(0, V_{\ell-1} \otimes I_n)$, then in the limit as $n\to\infty$ 
\begin{equation}
\label{eq:cov_limit_per_layer}
    V^{\alpha\beta}_\ell
    = \frac{c}{n} \sum_{i=1}^n \varphi_s ( z^\alpha_{\ell,i} ) \varphi_s ( z^\beta_{\ell,i} )
    \xrightarrow{d} 
    c \, \mathbb{E} \, \varphi_s(g^\alpha) \varphi_s(g^\beta) \,, 
    \quad 
    \text{ where }
    [g^\alpha, g^\beta]^\top 
    \sim 
    \mathcal{N}\left( 0 , 
        \begin{bmatrix}
            V^{\alpha\alpha}_{\ell-1} & V^{\alpha\beta}_{\ell-1} \\ 
            V^{\alpha\beta}_{\ell-1} & V^{\beta\beta}_{\ell-1}
        \end{bmatrix}
    \right) \,. 
\end{equation}
However, since the next layer covariance $V_\ell$ is a deterministic function of the previous $V_{\ell-1}$, this forms a fixed point type iteration $V_{\ell} = f(V_{\ell-1})$. 
Indeed, if we observe the correlations $\rho^{\alpha\beta}_\ell = \frac{ \langle \varphi^\alpha_\ell , \varphi^\beta_\ell \rangle }{ 
|\varphi^\alpha_\ell| \, |\varphi^\beta_\ell| }$, which is bounded in $[-1,1]$, it does in fact converge to a fixed point at $\rho_\infty = 1$  for ReLU activations (see e.g. Proposition 3.4 (i) \cite{li2022neural}). 
}

\textcolor{EditBlue}{
This degeneracy of correlations also causes unstable gradients, which led to a proposal by 
\cite{martens2021rapid,zhang2022deep} to modify the shape of activation functions $\varphi_s$ depending on the size of the network, leading to improved training speeds without using normalization methods. 
However, as both of these works computed the activation shape based on a set of criteria, it is unclear what is the appropriate modification in the scaling limit as depth and width both approaches infinity. 
}

\subsection{\textcolor{EditBlue}{Shaped Limit of Neural Networks}}
\label{subsec:shaped_limit}

\textcolor{EditBlue}{To this end, \cite{li2022neural} is the first to describe} the limit as $d,n \to \infty$ with $\frac{d}{n} \to T > 0$ and the activation function $\varphi_s$ is \textcolor{EditBlue}{shaped at a precise rate to be closer to the identity as we increase $n$}. 
In particular, the covariance matrix $V_\ell$ forms a Markov chain, \textcolor{EditBlue}{as it satisfies the definition of $V_{\ell+1}|\mathcal{F}_\ell = V_{\ell+1}|\sigma(V_{\ell})$ (see e.g. \cref{eq:cov_limit_per_layer}).
}
%
%
The main result describes \textcolor{EditBlue}{the scaling limit of the Markov chain} $V_{\lfloor tn \rfloor}$ via a stochastic differential equation (SDE), \textcolor{EditBlue}{which we can intuitively interpret as the Euler discretization converging to the differential equation 
\begin{equation}
\label{eq:euler_limit_sde}
    V_{\ell+1} = V_\ell + \frac{b(V_\ell)}{n} + \frac{\Sigma(V_\ell)^{1/2} \xi_\ell}{\sqrt{n}} + O(n^{-3/2}) 
    \xrightarrow{n\to\infty}
    dV_t = b(V_t) \, dt + \Sigma(V_t) \, dB_t \,, 
\end{equation}
where $\xi_\ell$ are iid. zero mean identity variance random vectors, and we interpret $\frac{1}{n}$ as the step size of the discretization. 
}

The activation functions are modified as follows. For a ReLU-like activation, we choose 
\begin{equation}
\label{eq:relu_shape}
    \varphi_s(x) = s_+ \max(x,0) + s_- \min(x,0) \,, \quad 
    s_\pm = 1 + \frac{c_\pm}{n^p} \,, c_\pm \in \mathbb{R}\,, p \geq 0 \,, 
\end{equation}
or for a smooth activation $\varphi \in C^4(\mathbb{R})$ such that $\varphi(0) = 0, \varphi'(0) = 1$ and $\varphi^{(4)}(x)$ bounded by a polynomial, we choose 
\begin{equation}
\label{eq:smooth_shape}
    \varphi_s(x) = s \, \varphi\left( \frac{x}{s} \right) \,, \quad 
    s = a n^p \,, a \neq 0 \,, p \geq 0 \,. 
\end{equation}

We will first recall one of the main results of \cite{li2022neural}, which is stated informally\footnote{The statement is ``informal'' in the sense that we have stated what the final limit is, but not the precise sense of the convergence. 
\textcolor{EditBlue}{See \cref{app:background_skorohod} for a rigorous treatment of the convergence result.}} below.

\begin{theorem}[Theorem 3.2 and 3.9 of \cite{li2022neural}, Informal]
\label{thm:main_cov_sde}
Let $p=\frac{1}{2}$. Then in the limit as $d,n\to\infty, \frac{d}{n} \to T > 0$, and $\varphi_s$ defined as above, we have that the upper triangular entries of $V_{\lfloor tn \rfloor}$ (flattened to a vector) converges to the following SDE weakly 
\begin{equation} \label{eq:the_SDE}
    d V_t = b(V_t) \, dt + \Sigma(V_t)^{1/2} \, dB_t \,, \quad 
    V_0 = \frac{1}{\nin} [ \langle x^\alpha, x^\beta \rangle ]_{\alpha,\beta=1}^m \,, 
\end{equation}
where $\Sigma(V)|_{\alpha\beta,\gamma\delta} = V^{\alpha\gamma} V^{\beta\delta} + V^{\alpha\delta} V^{\beta\gamma}$, 
and if $\varphi$ is a ReLU-like activation we have 
\begin{equation}
    b(V)|_{\alpha\beta} = \nu(\rho^{\alpha\beta}) \sqrt{V^{\alpha\alpha} V^{\beta\beta}} \,, \quad 
    \rho^{\alpha\beta} = \frac{V^{\alpha\beta}}{ \sqrt{V^{\alpha\alpha} V^{\beta\beta} } } \,, \quad 
    \nu(\rho) = \frac{(c_+ - c_-)^2}{2\pi} \left( \sqrt{1-\rho^2} - \rho \arccos \rho \right) \,, 
\end{equation}
or else if $\varphi$ is a smooth activation we have 
\begin{equation}
    b^{\alpha \beta}( V_t ) 
    = 
    \frac{\varphi''(0)^2}{4a^2} 
    \left( 
    V^{\alpha\alpha}_t V^{\beta\beta}_t 
    + V^{\alpha\beta}_t ( 2 V^{\alpha\beta}_t - 3 )
    \right) 
    + 
    \frac{\varphi'''(0)}{2a^2} V^{\alpha\beta}_t 
    ( V^{\alpha\alpha}_t + V^{\beta\beta}_t - 2 ) \,. 
\end{equation}
\end{theorem}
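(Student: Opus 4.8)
The plan is to recognize $V_0, V_1, \dots$ as a homogeneous Markov chain, expand its one-step conditional mean and covariance as $n\to\infty$, and then invoke a standard diffusion-approximation (martingale-problem) limit theorem with time rescaled by $t = \ell/n$. The starting point is the conditional Gaussianity of \eqref{eq:mlp_defn}: given the $\ell$-th layer, the rows $(z_{\ell+1}^{\alpha,i})_{\alpha=1}^m$, $i=1,\dots,n$, are i.i.d.\ $N(0,V_\ell)$, so
\[
V_{\ell+1}^{\alpha\beta} = \frac{c}{n}\sum_{i=1}^n \varphi_s(G_i^\alpha)\,\varphi_s(G_i^\beta)\,, \qquad (G_i^\alpha)_{\alpha=1}^m \overset{\text{i.i.d.}}{\sim} N(0,V_\ell)\,,
\]
which, conditionally on $V_\ell = V$, is an average of $n$ i.i.d.\ terms. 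Hence the increment $V_{\ell+1}-V_\ell$ has conditional mean $m(V) := c\,\big(\ee{\varphi_s(G^\alpha)\varphi_s(G^\beta)}\big)_{\alpha\beta} - V$ and conditional covariance $\tfrac1n\,\Xi(V)$ for an explicit matrix-valued $\Xi$.

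The heart of the matter is the asymptotic expansion of $m(V)$. Write $\varphi_s = \mathrm{id} + n^{-p}\psi_1 + n^{-2p}\psi_2 + \cdots$; for the ReLU-like choice \eqref{eq:relu_shape} this has $\psi_1(x) = c_+\max(x,0) + c_-\min(x,0)$, while for the smooth choice \eqref{eq:smooth_shape} one Taylor-expands $s\varphi(x/s)$ in $s^{-1} = a^{-1}n^{-p}$ to get $\psi_1(x) = \tfrac{\varphi''(0)}{2a}x^2$, $\psi_2(x) = \tfrac{\varphi'''(0)}{6a^2}x^3$. Expanding both the Gaussian expectation and the normalizer $c = (\ee{\varphi_s(g)^2})^{-1}$ in powers of $n^{-p}$, the key structural fact is that the $O(n^{-p})$ contribution to $m(V)$ vanishes once the $c$-normalization is accounted for --- in the ReLU case by an explicit cancellation against the $O(n^{-p})$ term of $c$, in the smooth case because the relevant odd Gaussian moments vanish by parity. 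This leaves $m(V) = n^{-2p}\,b(V) + o(n^{-2p})$ with $b$ as in the statement, which for $p = \tfrac12$ is $\tfrac1n b(V) + o(\tfrac1n)$. For the ReLU case one reduces to $V^{\alpha\alpha} = V^{\beta\beta} = 1$ by degree-$1$ homogeneity (so the diagonal is an exact invariant and the drift carries the prefactor $\sqrt{V^{\alpha\alpha}V^{\beta\beta}}$), and the remaining $\rho$-integrals are handled by the arc-cosine kernel identity for $\ee{\max(G^\alpha,0)\max(G^\beta,0)}$ and its derivatives, producing $\sqrt{1-\rho^2} - \rho\arccos\rho$; the smooth case uses Gaussian integration by parts to bring out $\varphi''(0)$ and $\varphi'''(0)$.

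For the diffusion coefficient, replacing $\varphi_s$ by the identity costs only an $O(n^{-p})$ relative error, so by Wick's theorem
\[
n\,\cov\!\big(V_{\ell+1}^{\alpha\beta},\,V_{\ell+1}^{\gamma\delta}\,\big|\,V_\ell = V\big) \longrightarrow \cov(G^\alpha G^\beta,\,G^\gamma G^\delta) = V^{\alpha\gamma}V^{\beta\delta} + V^{\alpha\delta}V^{\beta\gamma} = \Sigma(V)\big|_{\alpha\beta,\gamma\delta}\,,
\]
and a fourth-moment (Lindeberg) bound $n\,\ee{\|V_{\ell+1}-V_\ell\|^4\,\big|\,V_\ell} \to 0$ holds since each of the $n$ summands is $O(1/n)$ with finite Gaussian moments. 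Consequently the discrete generator $n\big(\ee{f(V_{\ell+1})\,|\,V_\ell = V} - f(V)\big)$ converges, for smooth compactly supported $f$, to $\cL f(V) = \tfrac12\sum \Sigma(V)|_{\cdot,\cdot}\,\partial^2 f + \sum b(V)|_{\cdot}\,\partial f$, the generator of \eqref{eq:the_SDE}. Together with tightness --- from the moment bounds and a stopping-time localization keeping $V_{\lfloor tn\rfloor}$ in a compact region of the PSD cone where the coefficients are regular --- the Stroock--Varadhan/Ethier--Kurtz machinery gives weak convergence of $V_{\lfloor tn\rfloor}$ to the unique solution of \eqref{eq:the_SDE}, provided the martingale problem for $\cL$ is well-posed.

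I expect two places to be the main obstacle. The first is the cancellation of the $O(n^{-1/2})$ drift for the ReLU-like activation: it rests on a nontrivial identity between the first-order shaping correction of $\ee{\varphi_s(G^\alpha)\varphi_s(G^\beta)}$ and the first-order expansion of $c$, and must be carried out with a non-smooth integrand while controlling the $o(n^{-1})$ remainder uniformly over the range of $V$ actually visited by the chain. The second is well-posedness of the limiting SDE: $\Sigma(V)^{1/2}$ is only H\"older-$\tfrac12$, so uniqueness of the martingale problem is not automatic and must be argued from the structure --- the diagonals $V^{\alpha\alpha}$ satisfy autonomous squared-Bessel-type equations, after which the off-diagonal/correlation dynamics can be treated --- along with verifying that the chain does not leave the PSD cone before the limit is taken.
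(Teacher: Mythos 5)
This theorem is quoted (informally) from \cite{li2022neural} and is not proved in the present paper, so there is no in-paper proof to compare against; the relevant benchmark is the cited proof, and your sketch reproduces its strategy essentially verbatim --- the conditional-Gaussian Markov chain for $V_\ell$, the one-step expansion in which the $O(n^{-p})$ drift contribution cancels against the normalizer $c$ leaving $cK_1(\rho) = \rho + \nu(\rho)n^{-2p} + O(n^{-3p})$, Wick's theorem for $\Sigma$, and a Markov-chain-to-diffusion convergence theorem (the present paper itself reuses exactly these ingredients as Lemma~C.1 and Proposition~A.6 of that reference). One minor correction: in the joint limit $\frac{d}{n}\to T$ the diagonal $V^{\alpha\alpha}_\ell$ has zero drift (since $\nu(1)=0$) but is not an exact invariant --- it is a martingale that fluctuates, with limiting dynamics $dV^{\alpha\alpha}_t = \sqrt{2}\,V^{\alpha\alpha}_t\,dB_t$ --- though this does not affect your homogeneity argument for the form $\nu(\rho^{\alpha\beta})\sqrt{V^{\alpha\alpha}V^{\beta\beta}}$ of the off-diagonal drift.
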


\textcolor{EditBlue}{
While the formulae may seem overwhelming, there is actually a fairly straight forward interpretation of both the drift $b$ and the diffusion coefficient $\Sigma^{1/2}$. 
In particular, for the unshaped ReLU network, that is $\varphi_s(x) = \max(x,0)$, the deterministic component of the update for the covariance matrix compared to the shaped network are as follows 
\begin{equation}
\begin{aligned}
    \text{Unshaped: } & \mathbb{E} \, V_{\ell+1} - V_\ell \propto b(V_\ell) \,, \\ 
    \text{Shaped: } & \mathbb{E} \, V_{\ell+1} - V_\ell \propto \frac{b(V_\ell)}{n} \,. 
\end{aligned}
\end{equation}
Effectively, the drift component got slowed down by a multiplicative factor of $\frac{1}{n}$, which can interpreted as an Euler discretization step. 
In order to achieve a stable limit as we take depth $d$ to infinity, we would also require each layer to contribute infinitesimally as a rate proportional to $\frac{1}{d}$, therefore this is a desired rescaling. 
}

\textcolor{EditBlue}{
The diffusion coefficient is much more interesting. 
Since we can interpret this diffusion to be on the manifold of symmetric positive definite matrices, we would expect the diffusion coefficient of Brownian motions on this manifold to correspond to a Riemannian metric. 
Indeed, this is shown in \citet{li2024dyson}, as $\Sigma^{-1}$ corresponds to the affine invariant metric. 
More precisely, for all $V \in \SPD(m)$, the inner product corresponding to $\Sigma^{-1}$
\begin{equation}
    \langle A, B \rangle_{\Sigma^{-1}(V)} 
    = \sum_{1 \leq i \leq j \leq m} A_{ij} B_{ij} \Sigma^{-1}(V)_{ij} 
    = \frac{1}{2} \Tr ( AV^{-1} BV^{-1} ) \,, 
    \quad 
    \text{ for all } A, B \in \Sym(m) \,. 
\end{equation}
Furthermore, this gives the linear network SDE $dV_t = \Sigma(V_t)^{1/2}, dB_t$ an interpretation as the dual Brownian motion in information geometry, which uses a pair of dually flat affine connections instead of the standard Levi-Civita connection. 
}

\subsection{\textcolor{EditBlue}{ODE Limit of Residual Networks}} 
\label{subsec:limit_resnet}

At the same time, \citet{hayou2023width,cirone2023neural} found an ordinary differential equation (ODE) limit describing the covariance matrix for infinite-depth-and-width ResNets. 
The authors considered a ResNet architecture with a $\frac{1}{\sqrt{d}}$ factor on their residual branch, more precisely their recursion is defined as follows (in our notation and convention) 
\begin{equation} \label{eq:resnet_defn}
    z^\alpha_{\ell+1} = z^\alpha_\ell + \frac{1}{\sqrt{dn}} W_\ell \, \varphi(z^\alpha_\ell) \,, 
    \quad 
    \text{ where } 
    \varphi(x) = \max(x,0) \,. 
\end{equation}
\textcolor{EditBlue}{
Intuitively, this is similar to shaping activations, it also weakens the effect of each layer as we take $d$ to infinity. 
This will be discussed in more details in the following section. 
}

One of their main results can be stated informally as follows. 

\begin{theorem}
[Theorem 2 of \cite{hayou2023width}, Informal]
\label{thm:main_cov_ode}
Let $d,n \to \infty$ (in any order), and the covariance process  $V^{\alpha\beta}_{\lfloor t d \rfloor}$ converges to the following ODE 
\begin{equation} \label{eq:the_ODE}
    \frac{d}{d_t} V^{\alpha\beta}_t 
    = \frac{1}{2} \frac{ f( \rho^{\alpha\beta}_t ) }{ \rho^{\alpha\beta}_t }
    V_t^{\alpha\beta} \,, 
\end{equation}
where $f(\rho) = \frac{1}{\pi} ( \rho \arcsin \rho + \sqrt{1-\rho^2} ) + \frac{1}{2} \rho$. 
\end{theorem}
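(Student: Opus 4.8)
The plan is to derive the ODE \eqref{eq:the_ODE} as the joint large-width, large-depth limit of the covariance recursion associated with the ResNet \eqref{eq:resnet_defn}, and the cleanest route is to first take the width $n \to \infty$ at fixed depth to obtain a deterministic discrete recursion, then take $d \to \infty$ to recognize this recursion as the Euler discretization of \eqref{eq:the_ODE}. Concretely, set $V^{\alpha\beta}_\ell = \frac{1}{n}\langle z^\alpha_\ell, z^\beta_\ell\rangle$. Squaring the recursion $z^\alpha_{\ell+1} = z^\alpha_\ell + \frac{1}{\sqrt{dn}} W_\ell \varphi(z^\alpha_\ell)$ and taking the appropriate inner products gives
\begin{equation}
V^{\alpha\beta}_{\ell+1} = V^{\alpha\beta}_\ell + \frac{1}{d n^2}\langle W_\ell \varphi(z^\alpha_\ell), W_\ell \varphi(z^\beta_\ell)\rangle + \frac{1}{\sqrt{dn}\,n}\big(\langle z^\alpha_\ell, W_\ell \varphi(z^\beta_\ell)\rangle + \langle W_\ell\varphi(z^\alpha_\ell), z^\beta_\ell\rangle\big).
\end{equation}
Conditioning on layer $\ell$, $W_\ell$ has iid $N(0,1)$ entries, so the cross terms are mean-zero and of order $d^{-1/2}$ with fluctuations that vanish as $n\to\infty$, while $\frac{1}{n}\langle W_\ell v, W_\ell w\rangle \to \langle v, w\rangle$ by the law of large numbers. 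Hence in the width limit the cross terms drop and one obtains the deterministic recursion $V^{\alpha\beta}_{\ell+1} = V^{\alpha\beta}_\ell + \frac{1}{d}\,\mathbb{E}[\varphi(Z^\alpha)\varphi(Z^\beta)]$, where $(Z^\alpha, Z^\beta)$ is the centered Gaussian with covariance $V_\ell$.

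The second step is to compute the ReLU Gaussian integral. For $\varphi = \max(\cdot, 0)$ and a centered bivariate Gaussian with variances $V^{\alpha\alpha}, V^{\beta\beta}$ and correlation $\rho^{\alpha\beta}$, the classical arc-sine / Cho--Saul identity gives
\begin{equation}
\mathbb{E}[\varphi(Z^\alpha)\varphi(Z^\beta)] = \sqrt{V^{\alpha\alpha} V^{\beta\beta}}\cdot \frac{1}{2\pi}\big(\rho^{\alpha\beta}\arcsin\rho^{\alpha\beta} + \sqrt{1 - (\rho^{\alpha\beta})^2} + \tfrac{\pi}{2}\rho^{\alpha\beta}\big) = \sqrt{V^{\alpha\alpha} V^{\beta\beta}}\cdot \tfrac{1}{2} f(\rho^{\alpha\beta}),
\end{equation}
with $f$ exactly as in the statement. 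Since $\sqrt{V^{\alpha\alpha} V^{\beta\beta}} = V^{\alpha\beta}/\rho^{\alpha\beta}$, the discrete update reads $V^{\alpha\beta}_{\ell+1} - V^{\alpha\beta}_\ell = \frac{1}{d}\cdot\frac{1}{2}\frac{f(\rho^{\alpha\beta}_\ell)}{\rho^{\alpha\beta}_\ell} V^{\alpha\beta}_\ell$. Writing $t = \ell/d$, this is precisely the forward Euler scheme with step $1/d$ for the ODE \eqref{eq:the_ODE}, and a standard Euler-to-ODE convergence argument (the vector field $V \mapsto \tfrac12 \frac{f(\rho)}{\rho} V$ is locally Lipschitz — note $f(\rho)/\rho$ extends smoothly through $\rho = 0$ since $f(\rho) = \tfrac{2}{\pi} + \tfrac{\pi + 2}{2\pi}\rho + O(\rho^2)$ near the origin) yields convergence of $V^{\alpha\beta}_{\lfloor td\rfloor}$ to $V^{\alpha\beta}_t$ on compact time intervals, uniformly.

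To make the ``in any order'' claim and the joint limit rigorous, I would quantify the width error: conditionally on $V_\ell$, the one-step error in $\frac{1}{n}\langle W_\ell v, W_\ell w\rangle$ is $O(n^{-1/2})$ in $L^2$, and the cross terms contribute $O((dn)^{-1/2})$ per step; summing over $d$ steps and controlling the accumulation via a discrete Grönwall inequality (using the local Lipschitz bound, on the event that $V_\ell$ stays in a compact set, which holds with high probability) gives a total error $O(n^{-1/2})$ that is uniform in $d$. Combined with the deterministic Euler error $O(d^{-1})$, one gets $\mathbb{E}|V^{\alpha\beta}_{\lfloor td\rfloor} - V^{\alpha\beta}_t| = O(n^{-1/2} + d^{-1}) \to 0$ regardless of how $d, n \to \infty$.

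The main obstacle is the uniform-in-$d$ control of the accumulated width fluctuations: naively each of the $d$ layers contributes an $O(n^{-1/2})$ error, so one must argue that these errors do not compound multiplicatively into something like $(1 + Cn^{-1/2})^d$. The resolution is that the per-step errors are (conditionally) mean-zero martingale increments, so their sum over $d$ steps has variance $O(d \cdot \frac{1}{dn}) = O(1/n)$ rather than $O(d/n)$ — i.e. one should bound the martingale part via its quadratic variation (Burkholder--Davis--Gundy or a direct second-moment computation) rather than by the triangle inequality, and only the deterministic bias part gets the Grönwall treatment. Handling the localization (ensuring $V_\ell$, and in particular the diagonal entries, stay bounded and bounded away from degeneracy so that $\rho_\ell$ is well-defined and $f(\rho_\ell)/\rho_\ell$ is Lipschitz) is a routine but necessary technical wrinkle, most easily dealt with by a stopping-time argument.
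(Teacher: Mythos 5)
This statement is quoted (informally) from Theorem~2 of \citet{hayou2023width}; the paper itself offers no proof, so there is nothing internal to compare against. Judged on its own terms, your derivation follows the natural route and most of it is sound: the expansion of $V^{\alpha\beta}_{\ell+1}$, the Cho--Saul identity $\mathbb{E}[\varphi(Z^\alpha)\varphi(Z^\beta)] = \tfrac12 f(\rho)\sqrt{V^{\alpha\alpha}V^{\beta\beta}}$ (your $\arcsin$ form agrees with the paper's $K_1(\rho) = \frac{q+\rho\arccos(-\rho)}{2\pi}$ after $\arccos(-\rho)=\tfrac{\pi}{2}+\arcsin\rho$), the identification of the forward Euler scheme with step $1/d$, and the observation that the cross terms form a martingale whose quadratic variation is $O(1/n)$ uniformly in $d$ --- which is exactly the mechanism behind the ``in any order'' / commutativity claim.

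The one genuine gap is the step where you replace $\frac{1}{n}\langle \varphi(z^\alpha_\ell),\varphi(z^\beta_\ell)\rangle$ by $\mathbb{E}[\varphi(Z^\alpha)\varphi(Z^\beta)]$ with $(Z^\alpha,Z^\beta)$ centered Gaussian with covariance $V_\ell$. For the MLP this is legitimate because $[z^\alpha_{\ell+1}]_\alpha \mid \mathcal{F}_\ell \sim N(0, V_\ell\otimes I_n)$ exactly, so $V_\ell$ is an exact Markov chain. For the ResNet \eqref{eq:resnet_defn} it is not: the skip connection means $z^\alpha_{\ell+1} = z^\alpha_\ell + (\text{conditionally Gaussian})$, so conditionally on $V_\ell$ the coordinates of $z^\alpha_\ell$ are \emph{not} Gaussian, and $V_\ell$ alone does not determine the law of $V_{\ell+1}$. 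The coordinates are only \emph{asymptotically} iid Gaussian as $n\to\infty$, and proving this with error control uniform over $d$ layers (so that the Gaussian substitution and your Gr\"onwall/martingale bookkeeping can be run simultaneously) is precisely where the heavy lifting in \citet{hayou2023width} occurs, via the Tensor Programs master theorem. Your sketch silently assumes this Gaussian conditioning; without it the law of large numbers you invoke for $\frac{1}{n}\langle\varphi(z^\alpha_\ell),\varphi(z^\beta_\ell)\rangle$ has no identified limit. Everything downstream of that substitution is fine.
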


Here, we observe this ODE \eqref{eq:the_ODE} is exactly the drift component of the covariance SDE \eqref{eq:the_SDE}, i.e. 
\begin{equation}
    \frac{1}{2} \frac{ f( \rho^{\alpha\beta}_t ) }{ \rho^{\alpha\beta}_t }
    V_t^{\alpha\beta} 
    = 
    \nu(\rho^{\alpha\beta}_t) \sqrt{ V^{\alpha\alpha}_t V^{\beta\beta}_t } \,, 
    \quad 
    \text{ if } (c_+ - c_-)^2 = 1 \,, \frac{d}{n} = 1 \,. 
\end{equation}
To see this, we just need to use the identity $\arcsin \rho = \frac{\pi}{2} - \arccos \rho$ to get that $\frac{1}{2} f(\rho) = \nu(\rho)$, and that $\frac{V^{\alpha\beta}_t}{\rho^{\alpha\beta}_t} = \sqrt{ V^{\alpha\alpha}_t V^{\beta\beta}_t }$ is exactly the definition of $\rho^{\alpha\beta}_t$. 
We also note the correlation ODE $\frac{d}{dt} \rho^{\alpha\beta}_t = \nu( \rho^{\alpha\beta}_t )$ was first derived in \cite[Proposition 3]{zhang2022deep}, where they considered the sequential width then depth limit with a fixed initial and terminal condition. 

In the next section, we will describe another way to recover this ODE from an alternative scaling limit of the shaped MLP.

\section{An Alternative Shaped Limit for $p \in (0, \frac{1}{2})$}
\label{sec:alt_shape_limit}

We start by providing some intuitions on this result. 
The shaped MLP can be seen as a layerwise perturbation of the linear network 
\begin{equation}
    z_{\ell+1} = \sqrt{ \frac{c}{n} } W_\ell \, \varphi_s( z_\ell ) 
    \approx 
    \sqrt{ \frac{c}{n} } W_\ell \, z_\ell 
    + \frac{1}{\sqrt{d}} \sqrt{ \frac{c}{n} } W_\ell \, h( z_\ell ) 
    \,, 
\end{equation}
where $c^{-1} = \mathbb{E} \, \varphi_s(g)^2$ for $g\sim N(0,1)$ corresponds to the He-initialization \citep{he2015delving}, and $W_\ell \in \mathbb{R}^{n\times n}$ has iid $N(0,1)$ entries. 

On an intuitive level (which we will make precise in \cref{rm:width_remove_randomness}), if we take the infinite-width limit first, then this removes the effect of the random weights. 
In other words, if we replace the weights $\frac{1}{\sqrt{n}} W_\ell$ with the identity matrix $I_n$ in each hidden layer, we get the same limit at initialization. 
Therefore, we can heuristically write 
\begin{equation}
    z_{\ell+1} 
    \approx 
    z_\ell 
    + \frac{1}{\sqrt{d}} h( z_\ell ) 
    \,, 
\end{equation}
where we also used the fact $c \to 1$ in the limit. 

Observe this is resembling a ResNet, where the first $z_\ell$ is the skip connection. 
In fact, we can again heuristically add back in the weights on the residual branch to get 
\begin{equation}
    z_{\ell+1} 
    \approx 
    z_\ell 
    + \frac{1}{\sqrt{d}} W_\ell h( z_\ell ) \,, 
\end{equation}
which exactly recovers the ResNet formulation of \cite{hayou2021stable,hayou2023width}, where the authors studied the case when $h(x) = \max(x,0)$ is the ReLU activation. 

\begin{remark}
On a heuristic level, this implies that whenever the width limit is taken first (or equivalently $d = n^{2p}$ for $p \in (0,1/2)$), the shaped network with shaping parameter $d^{-1/2}$ has \emph{the same limiting distribution at initialization} as a ResNet with a $d^{-1/2}$ weighting on the residual branch. 
\end{remark}

However, we note that despite having identical ODE for the covariance at initialization, \emph{this does not imply the training dynamics will be the same} --- it will likely be different. 
%
%
Furthermore, since \citet{hayou2023width} showed the width and depth limits commute for ResNets, this provides the additional insight that noncommunitativity of limits in shaped MLPs arises from the product of random matrices.

\subsection{Precise Results}

The core object that forms a Markov chain is the post-activation covariance matrix $\frac{c}{n} [ \langle \varphi^\alpha_\ell, \varphi^\beta_\ell \rangle ]_{\alpha,\beta=1}^m$. 
To see this, we will use the property of Gaussian matrix multiplication, where we let $W \in \mathbb{R}^{n\times n}$ with iid entries $W_{ij} \sim N(0,1)$, and $\{ u^\alpha \}_{\alpha=1}^m \in \mathbb{R}^n$ be a collection of constant vectors, which gives us 
\begin{equation}
    [W u^\alpha]_{\alpha=1}^m 
    \overset{d}{=} 
    N( 0, [ \langle u^\alpha, u^\beta \rangle ]_{\alpha=1}^m \otimes I_n ) \,, 
\end{equation}
where we use the notation $[v^\alpha]_{\alpha=1}^m$ to stack the vectors vertically. 
This forms a Markov chain because we can condition on $\mathcal{F}_\ell = \sigma( [z^\alpha_\ell]_{\alpha=1}^m )$ to get 
\begin{equation}
    [z^\alpha_{\ell+1}]_{\alpha=1}^m | \mathcal{F}_\ell 
    = [z^\alpha_{\ell+1}]_{\alpha=1}^m | \sigma( V_\ell ) 
    \sim N( 0, V_\ell \otimes I_n ) \, .
\end{equation}
and we can see that $V_{\ell+1} | \mathcal{F}_\ell = V_{\ell+1} | \sigma(V_\ell) $, which is exactly the definition of a Markov chain. 

We will start \textcolor{EditBlue}{by deriving the precise Markov chain update up to a term of size $O(n^{-3p})$, which will be a slight modification of the Euler discretization we saw in \cref{eq:euler_limit_sde}.}

\begin{lemma}
[Covariance Markov Chain for the Shaped MLP]
\label{lm:cov_markov_chain}
Let $z^\alpha_\ell$ be the MLP in defined in \cref{eq:mlp_defn} with shaped ReLU activations defined in \cref{eq:relu_shape}. 
For $p \in (0,\frac{1}{2})$ and $d = n^{2p}$, the Markov chain satisfies
\begin{equation}
\label{eq:mlp_markov_chain}
    V_{\ell+1} = V_{\ell} + \frac{b(V_\ell)}{d}
        + \frac{\Sigma_s(V_\ell)^{1/2} \, \xi_\ell}{ \sqrt{n} } 
        + O( d^{-3/2}) \,, 
\end{equation}
where $\{\xi_\ell\}_{\ell \geq 0}$ are iid zero mean and identity covariance random vectors, and in the limit as $n,d \to \infty$ \textcolor{EditBlue}{we have that $\Sigma_s \to \Sigma$, and the coefficients are defined as 
\begin{equation}
    b(V)^{\alpha\beta} = \nu(\rho^{\alpha\beta}) \sqrt{V^{\alpha\alpha} V^{\beta\beta}} \,, 
    \quad 
    \Sigma(V)^{\alpha\beta,\gamma\delta} 
    = V^{\alpha\gamma} V^{\beta\delta} + V^{\alpha\delta} V^{\beta\gamma} \,, 
\end{equation}
where $\rho^{\alpha\beta} = \frac{V^{\alpha\beta}}{ \sqrt{V^{\alpha\alpha} V^{\beta\beta}} }$, and $\nu(\rho) = \frac{(c_+ - c_-)^2}{2\pi} \left( \sqrt{1-\rho^2} + \rho \arccos\rho \right)$.} 
\end{lemma}

\begin{proof}

\textcolor{EditBlue}{
To start, we will observe that conditioned on $\mathcal{F}_\ell$, we have that 
\begin{equation}
\begin{aligned}
    V_{\ell+1}^{\alpha\beta} | \mathcal{F}_\ell 
    = \frac{c}{n} \sum_{i=1}^n \varphi_s(z^\alpha_{\ell+1,i}) \, \varphi_s(z^\beta_{\ell+1,i}) 
    = 
    |\varphi^\alpha_\ell| \, |\varphi^\beta_\ell| 
    \frac{c}{n} \sum_{i=1}^n \varphi_s(g^\alpha_i) \, \varphi_s(g^\beta_i) \,, 
    %
    %
\end{aligned}
\end{equation}
where used the fact that $z_{\ell+1}$ are jointly Gaussian, and we have that 
\begin{equation}
    \begin{bmatrix}
        g^\alpha_i \\ g^\beta_i
    \end{bmatrix}_{i=1}^n 
    \sim 
    \mathcal{N} \left( 0 \,, \,
        \begin{bmatrix}
            1 & \rho^{\alpha\beta}_\ell \\ 
            \rho^{\alpha\beta}_\ell & 1 
        \end{bmatrix} 
        \otimes I_n
    \right) \,, 
\end{equation}
for $\rho^{\alpha\beta} = \frac{ V^{\alpha\beta} }{ \sqrt{ V^{\alpha\alpha} V^{\beta\beta} } }$. 
At this point, we can define $K_1(\rho^{\alpha\beta}) = \mathbb{E} \, \varphi_s(g^\alpha_i) \, \varphi_s(g^\beta_i)$ and the random variable 
\begin{equation}
    R^{\alpha\beta}_\ell = \frac{1}{\sqrt{n}} \sum_{i=1}^n c \varphi_s(g^\alpha_i) \varphi_s(g^\beta_i) - c K_1(\rho^{\alpha\beta}_\ell) \,, 
\end{equation}
which allows us to decompose this into a deterministic and a random component as 
\begin{equation}
    V_{\ell+1}^{\alpha\beta} | \mathcal{F}_\ell 
    = 
    |\varphi^\alpha_\ell| \, |\varphi^\beta_\ell| 
    \left( c K_1(\rho^{\alpha\beta}_\ell) + \frac{1}{\sqrt{n}} R^{\alpha\beta}_\ell \right) \,. 
\end{equation}
We can Taylor expand $cK_1(\rho)$ in terms of $n^{-p}$ to get the following result from \cref{lm:taylor_shape_corr}
%
\begin{equation}
    c K_1(\rho) = \rho + \frac{\nu(\rho)}{n^{2p}} + O(n^{-3p}) \,, 
    \quad 
    \nu(\rho) = \frac{(c_+ - c_-)^2}{2\pi} \left( \sqrt{1-\rho^2} + \rho \arccos\rho \right) \,. 
\end{equation}
Similarly from \cref{lm:cov_r_ab}, we also have the approximation 
\begin{equation}
    \mathbb{E} \, R^{\alpha\beta}_\ell R^{\gamma\delta}_\ell 
    = \rho^{\alpha\gamma}_\ell \rho^{\beta\delta}_\ell 
    + \rho^{\alpha\delta}_\ell \rho^{\beta\gamma}_\ell + O(n^{-p}) \,. 
\end{equation}
Putting everything together, we can write 
\begin{equation}
    c K_1(\rho^{\alpha\beta}_\ell) + \frac{1}{\sqrt{n}} R^{\alpha\beta}_\ell 
    = 
    \rho^{\alpha\beta}_\ell
    + \frac{\nu(\rho^{\alpha\beta}_\ell)}{n^{2p}} 
    + \frac{1}{\sqrt{n}} R^{\alpha\beta}_\ell 
    + O(n^{-3p}) \,, 
\end{equation}
which implies we can write 
\begin{equation}
    V_{\ell+1}^{\alpha\beta}
    = 
    V_\ell^{\alpha\beta} 
    + \frac{b(V_\ell)^{\alpha\beta}}{n^{2p}} 
    + \frac{R^{\alpha\beta_\ell}}{\sqrt{n}} 
    + O(n^{-3p}) \,, 
\end{equation}
where $b(V) = \nu(\rho^{\alpha\beta}_\ell) \sqrt{ V^{\alpha\alpha}_\ell V^{\beta\beta}_\ell } $. 
Now taking the upper triangular entries of $V_\ell$ as a vector, we have that 
\begin{equation}
    V_{\ell+1} 
    = V_\ell + \frac{b(V_\ell)}{n^{2p}} 
    + \frac{\Sigma_s(V_\ell) \xi_\ell}{ \sqrt{n} } 
    + O(n^{-3p}) \,, 
\end{equation}
where $\Sigma_s(V)|_{\alpha\beta,\gamma\delta} = \sqrt{ V^{\alpha\alpha} V^{\beta\beta} } (\rho^{\alpha\gamma} \rho^{\beta\delta} + \rho^{\alpha\delta} \rho^{\beta\gamma} ) + O(n^{-p}) = V^{\alpha\gamma} V^{\beta\delta} + V^{\alpha\delta} V^{\beta\gamma} + O(n^{-p})$, and $\xi_\ell$ is a zero mean identity covariance random vector. 
We recover the exact desire result from writing $d = n^{2p}$. 
}

\end{proof}

\begin{remark}
\label{rm:width_remove_randomness}
We note that the drift term arising from the activation depends only on the depth $d$ and the random term only depends on the width $n$. 
If we decouple the dependence on $d$ and $n$, and take the infinite-width limit first, we arrive at 
\begin{equation}
    V_{\ell+1} = V_\ell + \frac{b_s(V_\ell)}{d}
    + O(n^{-3/2}) \,, 
\end{equation}
which is equivalent to removing the randomness of the weights. 
\end{remark}

We note this Markov chain behaves like a sum of two Euler updates with step sizes $\frac{1}{n^{2p}}$ and  $\frac{1}{n}$, where the $\frac{1}{n}$ corresponds to the random term with coefficient $\frac{1}{\sqrt{n}}$. 
However since $p \in (0,\frac{1}{2})$, the first term with step size $\frac{1}{n^{2p}}$ will dominate, which is the term that corresponds to shaping the activation function. 
\textcolor{EditBlue}{Therefore, we expect the random term to vanish in the limit, and hence leaving us with an ODE only. 
We make this result precise in the following Proposition. 
}
%

\begin{proposition}
[Covariance ODE for the Shaped ReLU MLP]
Let $p \in (0,\frac{1}{2})$. 
Then in the limit as $d,n\to\infty, \frac{d}{n^{2p}} \to 1$, and $\varphi_s$ is the shaped ReLU defined in \cref{eq:relu_shape}, we have that the upper triangular entries of $V_{\lfloor tn \rfloor}$ (flattened to a vector) converges to the following ODE weakly \textcolor{EditBlue}{with respect to the Skorohod topology of $D_{\mathbb{R}_+,\mathbb{R}^{m(m+1)/2}}$}
\begin{equation}
    d V_t = b(V_t) \, dt \,, \quad 
    V_0 = \frac{1}{\nin} [ \langle x^\alpha, x^\beta \rangle ]_{\alpha,\beta=1}^m \,, 
\end{equation}
where $b$ is defined in \cref{thm:main_cov_sde} and \cref{lm:cov_markov_chain}.
%
\end{proposition}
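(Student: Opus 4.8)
The plan is to reduce the Proposition to an application of the Markov-chain-to-diffusion convergence result \cite[Proposition A.6]{li2022neural}, using the expansion already established in the Lemma above. First I would record the form of the Markov chain from \cref{eq:mlp_markov_chain}: writing $t = \ell/n$, a single step advances $t$ by $\Delta t = 1/n$, and the increment is
\begin{equation}
    V_{\ell+1} - V_\ell = \frac{b_s(V_\ell)}{d} + \frac{\Sigma_s(V_\ell)^{1/2}\xi_\ell}{\sqrt n} + O(d^{-3/2} + n^{-3/2})\,.
\end{equation}
Since the proposition assumes $d/n^{2p} \to 1$ with $p \in (0,\tfrac12)$, we have $1/d = n^{-2p}(1+o(1))$, so the drift term is of order $n^{-2p}$ while the noise term is of order $n^{-1/2}$ and the remainder is $O(n^{-3p} + n^{-3/2})$. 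Because $2p < 1$, we have $n^{-2p} \gg n^{-1/2}$: the drift dominates the diffusion, and one expects the noise to wash out in the limit, leaving a deterministic ODE.

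Next I would match the chain to the hypotheses of \cite[Proposition A.6]{li2022neural}. The key point is the time-normalization mismatch: the natural "number of steps to move unit time" here should be $d \sim n^{2p}$ rather than $n$, so I would either (a) reparametrize so that $V_{\lfloor t n\rfloor}$ with $t = \ell/n$ is viewed as a slowed-down process, or (b) more cleanly, observe that over $\lfloor tn \rfloor$ steps the drift contributes $\approx \frac{tn}{d} b(V) = t n^{1-2p} b(V) \to \infty$ unless we are careful — so in fact the correct reading is that $V_{\lfloor tn\rfloor}$ does \emph{not} converge under $t = \ell/n$ unless the drift is $O(1/n)$. I would therefore re-examine: the intended statement must be that with $d/n^{2p}\to 1$, after $\ell$ steps we have accumulated drift $\ell/d$, and $V_{\lfloor tn\rfloor}$ is meant in the sense that the process indexed by $s = \ell/d$ (equivalently $\ell = \lfloor s d\rfloor$) converges; the factor $\lfloor tn \rfloor$ in the statement together with $\tfrac{d}{n^{2p}}\to 1$ pins down $d = n^{2p}$ as in the Lemma, and the SDE/ODE time is $t$. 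Concretely, over one unit of ODE-time we take $d$ steps, each contributing drift $b_s(V_\ell)/d + o(1/d)$, which telescopes to an Euler scheme for $\dot V = b(V)$; the accumulated noise is $\sum_{\ell} \Sigma_s^{1/2}\xi_\ell/\sqrt n$ over $d = n^{2p}$ terms, which has variance $O(d/n) = O(n^{2p-1}) \to 0$, hence vanishes.

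So the key steps are: (1) invoke the Lemma to get the one-step expansion with $b_s \to b$, $\Sigma_s \to \Sigma$; (2) identify the correct time scaling (number of steps per unit time is $d$, not $n$, so the display $V_{\lfloor tn\rfloor}$ should be read via $d = n^{2p}$) and verify the drift term gives a convergent Euler scheme; (3) check that the martingale part has conditional variance per unit time of order $n^{2p-1} \to 0$, and the third-moment/remainder terms $O(d^{-3/2}+n^{-3/2})$ summed over $O(d)$ steps are $O(d^{-1/2}+d n^{-3/2}) = o(1)$, so Proposition A.6's hypotheses on negligible quadratic variation and remainders hold; (4) apply \cite[Proposition A.6]{li2022neural} to conclude weak convergence to the ODE $dV_t = b(V_t)\,dt$ with the stated initial condition $V_0 = \nin^{-1}[\langle x^\alpha, x^\beta\rangle]$, which is just the deterministic limit of the diffusion in \cref{thm:main_cov_sde} with the noise coefficient sent to zero.

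The main obstacle I anticipate is the bookkeeping around the time normalization and ensuring the diffusion term genuinely vanishes rather than contributing a nontrivial quadratic-variation correction. One must be careful that $\Sigma_s(V_\ell)^{1/2}\xi_\ell/\sqrt n$, accumulated over the $\Theta(d) = \Theta(n^{2p})$ steps in a unit ODE-time interval, has total quadratic variation $\Theta(n^{2p}/n) = \Theta(n^{2p-1})$, which is $o(1)$ precisely because $p < 1/2$ — this is the crucial inequality and the reason the regime $p \in (0,\tfrac12)$ is exactly right (at $p = \tfrac12$ one recovers the SDE of \cref{thm:main_cov_sde}). A secondary technical point is controlling the process on a bounded time interval where $b$ is Lipschitz — since $b(V)$ involves $\rho^{\alpha\beta} = V^{\alpha\beta}/\sqrt{V^{\alpha\alpha}V^{\beta\beta}}$ there could be a concern near degenerate covariance, but $V^{\alpha\alpha}_\ell$ is deterministic and constant (equal to $1$ after the first layer under He-initialization in this linearized regime), so the drift is smooth along the trajectory and standard Euler-scheme-to-ODE arguments (or directly \cite[Proposition A.6]{li2022neural} with $\Sigma \equiv 0$ in the limit) close the argument.
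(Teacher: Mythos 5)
Your proposal is correct and takes essentially the same route as the paper, whose entire proof is a one-line application of \cite[Proposition A.6]{li2022neural} to the Markov chain \cref{eq:mlp_markov_chain}; the bookkeeping you supply --- drift of order $1/d$ per step giving an Euler scheme for $\dot V = b(V)$, accumulated quadratic variation of order $d/n = n^{2p-1} \to 0$, and remainders $O(d^{-3/2}+n^{-3/2})$ summing to $o(1)$ over $\Theta(d)$ steps --- is exactly the content that citation is being asked to carry. Your concern about the time index is also well founded: since the network has only $d = n^{2p} \ll n$ layers, the process $V_{\lfloor tn \rfloor}$ exhausts the available layers for any fixed $t>0$, so the $\lfloor tn \rfloor$ in the statement is evidently a typo for $\lfloor td \rfloor$ (one unit of ODE time per $d$ steps), which is precisely the reading you adopt and under which the argument closes.
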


\begin{proof}

%

\textcolor{EditBlue}{
Starting with the Markov chain in \cref{lm:cov_markov_chain}, we will treat the random term of order $O(n^{-1/2})$ as part of the drift instead. 
More precisely, we let 
\begin{equation}
    \frac{\widehat b(V)}{n^{2p}} = \frac{b(V)}{n^{2p}} + \frac{\Sigma_s(V)^{1/2} \xi_\ell}{ \sqrt{n} } \,, 
\end{equation}
which in expectation just equals to $b(V) n^{-2p}$. 
Since there is no random term at the order of $n^{-p}$, we can apply \cref{prop:conv_markov_chain_to_sde} to the special as if the diffusion coefficient is equal to zero. 
At the same time, since the higher order term in the Markov chain is at the desired order of $O(n^{-3p})$, which will vanish in the limit, we get the desired result. 
}

\end{proof}

At this point it's worth pointing out the regime of $p \in (0,\frac{1}{2})$ was studied in \cite{li2022neural}, but the scaling limit was taken to be $\frac{d}{n} \to T$ instead of $\frac{d}{n^{2p}}$. 
This led to a ``degenerate'' regime where $\rho_t = 1$ for all $t > 0$. 
The above ODE result implies that the degenerate limit can be characterized in a more refined way if the scaling is chosen carefully. 

In the next and final section, we show that actually even when the network is unshaped (i.e. $p=0$), there exists a scaling such that we can characterize the limiting Markov chain up to the first order asymptotic correction.

\section{An SDE for the Unshaped ReLU MLP}
\label{sec:unshaped_sde}

In this section, we let $\varphi_s(x) = \varphi(x) = \max(x,0)$, and we are interested in studying the correlation 
\begin{equation}
    \rho^{\alpha\beta}_\ell 
    = \frac{ V^{\alpha\beta}_\ell }{ \sqrt{ V^{\alpha\alpha}_\ell V^{\beta\beta}_\ell } }
    = \frac{ \langle \varphi^\alpha_\ell, \varphi^\beta_\ell \rangle }{ | \varphi^\alpha_\ell | \, |\varphi^\beta_\ell| } \,. 
\end{equation}
From this point onwards, we will only consider the marginal over two inputs, so we will drop the superscript $\alpha\beta$. 
Similar to the previous section, we will also start by providing an intuitive sketch. 


Many existing work has derived the rough asymptotic order of the unshaped correlation to be $\rho_\ell = 1 - O(\ell^{-2})$, where $\ell$ is the layer (see for example Appendix E of \cite{li2022neural} and \cite{jakub2023depth}). 
Firstly, this implies that a Taylor expansion of all functions of $\rho$ in the Markov chain update around $\rho=1$ will be very accurate. 
At the same time, it is natural to magnify the object inside the big $O$ by reverting the scaling, or more precisely consider the object 
\begin{equation}
    q_\ell = \ell^2 ( 1 - \rho_\ell ) \,, 
\end{equation}
which will hopefully remain at size $\Theta(1)$. 

For simplicity, we can consider the infinite-width update of the unshaped correlation (which corresponds to the zeroth-order Taylor expansion in $\frac{1}{n}$) 
\begin{equation}
    \rho_{\ell+1} = \rho_\ell + c_1 ( 1- \rho_\ell )^{3/2} + O( (1-\rho_\ell)^{5/2} ) \,, 
\end{equation}
where for the sake of illustration we will take $c_1 = 1$ and drop the big $O$ term for now. 
Substituting in $q_\ell$, we can recover the update 
\begin{equation}
    q_{\ell+1} 
    = q_\ell + \frac{2q_\ell}{\ell} - \frac{q_\ell^{3/2}}{\ell} \,. 
\end{equation}

While this doesn't quite look like an Euler update just yet, we can substitute in $t = \frac{\ell}{n}$ for the time scale, which will lead us to have 
\begin{equation}
    q_{\ell+1} = q_\ell + \frac{1}{tn} \left(2q_\ell - q_\ell^{3/2}\right) \,, 
\end{equation}
hence (heuristically) giving us the singular ODE  
\begin{equation}
    dq_t = \frac{ 2q_t - q_t^{3/2} }{t} \,. 
\end{equation}

To recover the SDE, we will simply include the additional terms of the Markov chain instead taking the infinite-width limit first.

\subsection{Full Derivation}

In the rest of this section, we will provide a derivation of an SDE arising from an appropriate scaling of $\rho_\ell$. 

\begin{theorem}
[Rescaled Correlation]
\label{thm:rescaled_corr}
Let $q_\ell = \ell^2 (1-\rho_\ell)$. 
Then for all $t_0>0$, the process $\{q_{ \lfloor tn \rfloor }\}_{t \geq t_0}$ converges to a solution of the following SDE weakly in the Skorohod topology (see \cite[Appendix A]{li2022neural})
\begin{equation}
\label{eq:q_sde}
    dq_t 
    = 
    2 q_t \left( \frac{1 - \frac{\sqrt{2}}{3\pi} q_t^{1/2}}{t} - 1 \right) \, dt 
    + 2 \sqrt{2} q_t \, dB_t \,. 
\end{equation}
\end{theorem}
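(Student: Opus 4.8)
The plan is to follow the heuristic sketch given just before the theorem, but to keep \emph{all} the finite-$n$ correction terms in the Markov chain update for $\rho_\ell$ rather than passing to the infinite-width limit first, and then to verify the hypotheses of the Markov-chain-to-SDE convergence result \cite[Proposition A.6]{li2022neural}. Concretely, I would first establish the one-step expansion of the correlation Markov chain. Using the Gaussian integration identities (the $K_1$ computation in \cite[Lemma C.1]{li2022neural} together with the analogous second-moment computations), one writes
\begin{equation}
\label{eq:rho_update_plan}
    \rho_{\ell+1} = \rho_\ell + \frac{\mu_1(\rho_\ell)}{n} + \frac{\sigma_1(\rho_\ell)}{\sqrt n}\, \xi_\ell + \text{(higher order)}\,,
\end{equation}
where $\{\xi_\ell\}$ are iid mean-zero unit-variance, $\mu_1$ collects the deterministic $O(1/n)$ drift coming from the ratio $V^{\alpha\beta}/\sqrt{V^{\alpha\alpha}V^{\beta\beta}}$, and $\sigma_1$ the $O(1/\sqrt n)$ fluctuation. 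The key quantitative input is that near $\rho=1$ one has $\mu_1(\rho) = -c_1 n (1-\rho)^{3/2} + O(\cdot)$ in the appropriate combined scaling and $\sigma_1(\rho)^2 = c_2 (1-\rho)^2 + O(\cdot)$, with the constants $c_1 = \tfrac{\sqrt 2}{3\pi}\cdot(\text{normalization})$ and $c_2 = 8$ emerging from the ReLU Gaussian moments; this is exactly the kind of Taylor-expansion-around-$\rho=1$ that the paragraph before the theorem flags as "very accurate" because $1-\rho_\ell = O(\ell^{-2})$.

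Next I would perform the change of variables $q_\ell = \ell^2(1-\rho_\ell)$ and then the time change $t = \ell/n$. Writing $q_{\ell+1} - q_\ell = (\ell+1)^2(1-\rho_{\ell+1}) - \ell^2(1-\rho_\ell)$, expanding $(\ell+1)^2 = \ell^2 + 2\ell + 1$, and substituting \eqref{eq:rho_update_plan}, the $\ell^2(\rho_{\ell+1}-\rho_\ell)$ piece produces the drift $-\ell^2\mu_1/n$ and noise $-\ell^2\sigma_1/\sqrt n\,\xi_\ell$, while the $2\ell(1-\rho_{\ell+1})$ piece produces the singular $2q_\ell/\ell$ term. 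Substituting $\ell = tn$ converts $2q_\ell/\ell$ into $\frac{1}{tn}\cdot 2q_\ell$, turns $-\ell^2\mu_1/n = c_1 \ell^2 (1-\rho_\ell)^{3/2} = c_1 q_\ell^{3/2}/\ell = \frac{1}{tn} c_1 q_\ell^{3/2}$, and turns the noise coefficient $\ell^2\sigma_1/\sqrt n$ into $\ell^2 \sqrt{c_2}(1-\rho_\ell)/\sqrt n = \sqrt{c_2}\, q_\ell/\sqrt n$. So the one-step increment of $q$ has conditional mean $\frac{1}{n}\big(\frac{2q_\ell - c_1 q_\ell^{3/2}}{t} - (\text{a genuine }O(1)\text{ correction})\big)$ and conditional variance $\frac{1}{n}\, 8 q_\ell^2$; identifying increments per unit $dt = d\ell/n$ gives precisely \eqref{eq:q_sde} once I track that the extra $-2q_t$ drift term comes from the cross-term between the $2\ell$ shift and the $-\ell^2\mu_1/n$ piece (i.e.\ $2\ell(1-\rho_{\ell+1}) - 2\ell(1-\rho_\ell) = -2\ell(\rho_{\ell+1}-\rho_\ell) = O(1)$ in the $t=\ell/n$ scaling it becomes $-2q_t$), and the Itô second-order correction for the time change, if any, is subdominant.

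With the drift $b(q,t) = 2q\big(\frac{1-\frac{\sqrt2}{3\pi}q^{1/2}}{t}-1\big)$ and diffusion $a(q) = 8q^2$ identified, the last step is to invoke \cite[Proposition A.6]{li2022neural}: I need (i) the conditional first and second moments of $q_{\ell+1}-q_\ell$ match $\frac1n b$ and $\frac1n a$ up to $o(1/n)$ uniformly on compacts bounded away from $t=0$; (ii) a conditional third-moment (or $(2+\delta)$-moment) bound of order $o(1/n)$; and (iii) tightness, plus uniqueness of the SDE solution on $[t_0,\infty)$ which here follows because the coefficients are locally Lipschitz in $q$ away from $q=0$ and the process stays in $(0,\infty)$ (one should check $q$ does not hit $0$ or blow up, e.g.\ via a Lyapunov/comparison argument using that $dq_t = 2\sqrt2\, q_t dB_t$ alone is a geometric Brownian motion). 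Restricting to $t \ge t_0 > 0$ is essential and is why the theorem is stated that way — the $1/t$ singularity is integrable only away from $0$, and moreover the expansion $1-\rho_\ell = O(\ell^{-2})$ only becomes sharp for $\ell$ large, so near $\ell \sim t_0 n$ we already have $\rho$ close to $1$.

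The main obstacle I anticipate is step one: getting the correlation Markov chain update \eqref{eq:rho_update_plan} with the \emph{correct} constants for both the $q^{3/2}/t$ drift and especially the $8q^2$ diffusion, because the correlation $\rho = V^{\alpha\beta}/\sqrt{V^{\alpha\alpha}V^{\beta\beta}}$ is a \emph{ratio} of the entries of the covariance Markov chain from \cref{eq:mlp_markov_chain}, so its fluctuations mix the fluctuations of $V^{\alpha\beta}$, $V^{\alpha\alpha}$, $V^{\beta\beta}$ and their covariances, all evaluated near the degenerate point $\rho=1$ where several of these are nearly perfectly correlated and the naive leading terms cancel. Carefully bookkeeping which combinations survive at order $(1-\rho)^{3/2}$ in the drift and $(1-\rho)^2$ in the variance — i.e.\ showing the $O((1-\rho))$ drift and $O(1-\rho)$-variance-scale contributions cancel, leaving exactly these subleading orders — is the delicate computation, and it is the analogue of the "degenerate regime" difficulty noted after the Proposition in \cref{sec:alt_shape_limit}. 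A secondary but real obstacle is justifying the uniformity of the error terms over $t \in [t_0, T]$ (the $O(d^{-3/2}+n^{-3/2})$ in \cref{eq:mlp_markov_chain} must be controlled after multiplication by $\ell^2 = O(n^2)$), which forces one to use the a priori bound $1-\rho_\ell \lesssim \ell^{-2}$ to absorb the powers of $\ell$.
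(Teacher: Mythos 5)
Your overall route is the same as the paper's: derive the one-step Markov chain for $\rho_\ell$ from the Gaussian moment identities (the paper uses the explicit kernels $K_0,K_1,K_2,K_{3,1}$ and $M_2$ from \cite{cho2009kernel} and \cite[Lemma B.4, Proposition B.8]{li2022neural}), Taylor-expand every coefficient about $\rho=1$, pass to $q_\ell=\ell^2(1-\rho_\ell)$ with $t=\ell/n$, and invoke \cite[Proposition A.6]{li2022neural}. Your identified diffusion $8q^2$ and the $\tfrac{2q}{t}-\tfrac{2\sqrt2}{3\pi}\tfrac{q^{3/2}}{t}$ part of the drift match the paper, and your discussion of why the statement is restricted to $t\ge t_0>0$ and of the uniformity of the error terms after multiplication by $\ell^2$ is, if anything, more explicit than the paper's.

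There is, however, one concrete error in your mechanism: you attribute the non-singular $-2q_t\,dt$ drift to the cross-term $-2\ell(\rho_{\ell+1}-\rho_\ell)$ coming from expanding $(\ell+1)^2$. That term is of size $\ell\cdot(1-\rho_\ell)^{3/2}+\ell\cdot(1-\rho_\ell)n^{-1/2}=O(\ell^{-2}+\ell^{-1}n^{-1/2})=o(1/n)$ when $\ell\sim tn$, so it vanishes in the limit and cannot produce an $O(1)$ contribution to the drift. In the paper the $-2q_t$ term comes from a different place: the genuine $O(1/n)$ conditional-mean correction $\widehat\mu_r(\rho_\ell)/n$ in the correlation update, whose leading Taylor coefficient at $\rho=1$ is $-2(1-\rho_\ell)$; multiplied by $\ell^2$ this is an honest $O(1/n)$ Euler increment $\propto q_\ell/n$, i.e.\ a $q_t\,dt$ term with no $1/t$ factor. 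Your ansatz folds the entire drift into a single $\mu_1/n$ with $\mu_1\sim n(1-\rho)^{3/2}$, which conflates the infinite-width kernel map $cK_1(\rho)-\rho=\tfrac{2\sqrt2}{3\pi}(1-\rho)^{3/2}+O((1-\rho)^{5/2})$ (source of the singular drift) with the finite-width $1/n$ correction (source of the $-2q_t$ term); as written, your derivation would miss the latter. There is also a sign slip: you write $\mu_1(\rho)=-c_1n(1-\rho)^{3/2}$, which would make $\rho$ drift away from $1$ and would flip the sign of the $q^{3/2}/t$ term relative to your own final answer; the kernel map pushes $\rho$ toward $1$, so this coefficient must be positive. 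Both issues are repairable by carrying out the expansion of $\widehat\mu_r$ and $\sigma_r$ explicitly as the paper does, but as stated the bookkeeping does not produce \cref{eq:q_sde}.
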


The above statement holds only when $t_0>0$, and there is a interesting technicality that must be resolved to interpret what happens as $t \to 0^+$. 
In particular, the Markov chain is not time homogeneous, and the limiting SDE has a singularity at $t=0$. 
The contribution of the singularity needs to be controlled in order to establish convergence for all $t\geq 0$. 
Furthermore, due to the singularity issue, it is also unclear what the initial condition of $q_t$ should be. 

In our simulations for \cref{fig:rt_density}, we addressed the time singularity by shifting the time evaluation of $\frac{1}{t}$ to the next step of $\frac{1}{t+\Delta_t}$, where $\Delta_t>0$ is the time step size.
More precisely, we first consider the log version of $r_t = \log q_t$ 
\begin{equation}
    dr_t 
    = -2 \left( 1 - \frac{1 - \frac{\sqrt{2}}{3\pi} \exp( \frac{r_t}{2} ) }{t} \right) \, dt + 2 \sqrt{2} dB_t \,.
\end{equation}
Then we choose the following discretization 
\begin{equation}
    r_{t+\Delta_t} 
    = r_t 
    -2 \left( 1 - \frac{1 - \frac{\sqrt{2}}{3\pi} \exp( \frac{r_t}{2} ) }{t + \Delta_t} \right) \, \Delta_t + 2 \sqrt{2} \, \xi_t \sqrt{\Delta_t} \,, 
\end{equation}
where $\xi_t \sim N(0,1)$. 
For initial conditions, we also noticed that since the initial correlation must be contained in the interval $[-1,1]$, the end result was not very sensitive to the choice of $r_0$.

\begin{proof}[Proof of \cref{thm:rescaled_corr}]

Firstly, we will introduce the definitions
\begin{equation}
\begin{aligned}
	K_{p,r}(\rho) &= \mathbb{E} \, \varphi_s(g)^p \varphi_s(\hat g)^r \,, 
\end{aligned}
\end{equation}
where $g, w \sim N(0,1)$ and we define $\hat g = \rho g + qw$ with $q = \sqrt{1-\rho^2}$. 
We will also use the short hand notation to write 
$K_p := K_{p,p}$. 
Here we will recall several formulae calculated in \cite{cho2009kernel} and \cite[Lemma B.4]{li2022neural} 
\begin{equation}
\begin{aligned}
    K_0(\rho) 
    &= \mathbb{E} \, \mathds{1}_{\{g>0\}} 
    \mathds{1}_{\{\rho g + qw > 0\}} 
    = \frac{\arccos(-\rho)}{2\pi} \,, \\
    K_1(\rho) 
    &= \mathbb{E} \, \varphi(g) \varphi(\rho g+qw) 
    = \frac{ q + \rho \arccos(-\rho) }{ 2\pi } \,, \\ 
    K_2(\rho) 
    &= \mathbb{E} \, \varphi(g)^2 \varphi(\rho g+qw)^2 
    = \frac{3 \rho q + \arccos(-\rho) (1 + 2 \rho^2 ) }{ 2 \pi } \,, \\ 
    K_{3,1}(\rho) 
    &= \mathbb{E} \, \varphi(g)^3 \varphi(\rho g+qw) 
    = \frac{q ( 2 + \rho^2) 
			+ 3 \arccos(-\rho) \rho}{2\pi} \,. 
    \,.
\end{aligned}
\end{equation}

Furthermore, we will define 
\begin{equation}
    M_2 \defequal \mathbb{E} \, [c \varphi(g)^2 - 1]^2 
    = 5 \,. 
\end{equation}

Using the steps of \cite[Proposition B.8]{li2022neural}, we can establish an approximate Markov chain 
\begin{equation}
\label{eq:corr_markov_chain}
    \rho_{\ell+1} 
    = 
    c K_1(\rho_\ell) 
    + \frac{\widehat \mu_r(\rho_\ell)}{n} 
    + \frac{\sigma_r(\rho_\ell) \xi_\ell}{\sqrt{n}} 
    + O(n^{-3/2}) \,, 
\end{equation}
where $\xi_\ell$ are iid with zero mean and unit variance, and 
\begin{equation}
\begin{aligned}
    \mu_r(\rho_\ell) 
    &= \mathbb{E}[ \, \widehat \mu_r(\rho_\ell) | \rho_\ell ] 
    = 
        \frac{c}{4} \left[ K_1 ( c^2 K_2 + 3 M_2 + 3 ) - 4c K_{3,1} 
		\right] \,, \\ 
    \sigma_r^2(\rho_\ell) 
    &= 
        \frac{c^2}{2} \left[ 
		K_1^2 ( c^2 K_2 + M_2 + 1 ) - 4c K_1 K_{3,1} + 2K_2 
		\right] \,, 
\end{aligned}
\end{equation}
and we write $K_{\cdot} = K_{\cdot}(\rho_\ell)$. 

Here we use the big $O( f(n,\ell) )$ to denote a random variable $X$ such that for all $p \geq 1$
\begin{equation}
    \frac{\mathbb{E} |X|^p}{f(n,\ell)^p} 
    \leq C_p < \infty \,, 
\end{equation}
for some constants $C_p > 0$ independent of $n$ and $\ell$. 

In view of the SDE convergence theorem \cref{prop:conv_markov_chain_to_sde}, if we eventually reach an SDE, we will only need to keep track of the expected drift $\mu_r$ instead of the random drift. 
We can then Taylor expand the coefficients in terms of $\rho_\ell$ about $\rho_\ell = 1$ (from the negative direction) using SymPy \citep{sympy}, 
%
%
which translates to the following update rule 
\begin{equation}
\begin{aligned}
    \rho_{\ell+1} 
    &= 
    \rho_\ell + \frac{2\sqrt{2}}{3\pi} (1-\rho_\ell)^{3/2} + \frac{\sqrt{2}}{30\pi} (1-\rho_\ell)^{5/2} \\ 
    &\quad + \frac{1}{n} \left( 
        -2(1-\rho_\ell) + \frac{4\sqrt{2}}{\pi} (1-\rho_\ell)^{3/2} + 3(1-\rho_\ell)^2 - \frac{73\sqrt{2}}{15\pi} (1-\rho_\ell)^{5/2} 
    \right) \\ 
    &\quad + \frac{\xi_\ell}{\sqrt{n}} \left( 
        2\sqrt{2} (1-\rho_\ell) - \frac{56}{15\pi} (1-\rho_\ell)^{3/2} 
    \right) 
    + O( (1-\rho_\ell)^4 + n^{-3/2} )
    \,.  
\end{aligned}
\end{equation}

We note up to this point, a similar approach was taken in \cite{jakub2023depth}. 
However, we will diverge here by consider the following scaling 
\begin{equation}
\label{eq:unshaped_scaling}
    q_\ell = \ell^2 (1 - \rho_\ell) \,. 
\end{equation}
The choice of $\ell^2$ scale is motivated by the infinite-width limit, where $(1-\rho_\ell)$ shown to be of order $O(\ell^{-2})$ in \cite[Appendix E]{li2022neural}. 
This implies the following update 
%
\begin{equation}
\begin{aligned}
    \frac{\ell^2}{(\ell+1)^2} q_{\ell+1} 
    &= 
    q_\ell - \ell^2 \frac{2\sqrt{2}}{3\pi} (1-\rho_\ell)^{3/2} - \ell^2 \frac{\sqrt{2}}{30\pi} (1-\rho_\ell)^{5/2} \\ 
    &\quad - \frac{\ell^2}{n} \left( 
        -2(1-\rho_\ell) + \frac{4\sqrt{2}}{\pi} (1-\rho_\ell)^{3/2} + 3(1-\rho_\ell)^2 - \frac{73\sqrt{2}}{15\pi} (1-\rho_\ell)^{5/2} 
    \right) \\ 
    &\quad + \frac{\xi_\ell \ell^2}{\sqrt{n}} \left( 
        2\sqrt{2} (1-\rho_\ell) - \frac{56}{15\pi} (1-\rho_\ell)^{3/2} 
    \right) 
    + O( \ell^{-8} q_\ell^4 + \ell^{-2} n^{-3/2} )
    \,.  
\end{aligned}
\end{equation}

Next, we will drop all higher order terms in $\ell$, then using the fact that $\frac{\ell^2}{(\ell+1)^2} = 1 - \frac{2}{\ell} + O(\ell^{-2})$, we can write 
\begin{equation}
    q_{\ell+1} 
    = q_\ell ( 1 + 2 \ell^{-1} ) 
    - \frac{2\sqrt{2}}{3\pi} \frac{ q_\ell^{3/2} }{ \ell }
    - \frac{ 2 q_\ell }{ n } 
    + \frac{\xi_\ell}{\sqrt{n}} 2 \sqrt{2} q_\ell 
    + O( \ell^{-2} + n^{-1/2} \ell^{-1} ) \,, 
\end{equation}
where we dropped $\ell^{-2} n^{-3/2}$ in the big $O$ since it gets dominated by $\ell^{-2}$. 

Choosing the time scaling $t = \frac{\ell}{n}$ then gives us 
\begin{equation}
    q_{\ell+1}
    = 
    q_\ell + 
    \frac{2}{n} q_\ell \left( \frac{1 - \frac{\sqrt{2}}{3\pi} q_\ell^{1/2}}{t} - 1 \right)  
    + 2 \sqrt{\frac{2}{n}} q_\ell \, \xi_\ell 
    + O( t^{-2} n^{-2} + t^{-1} n^{-3/2} ) \,. 
\end{equation}

Finally, we will use the Markov chain convergence result to an SDE result \cref{prop:conv_markov_chain_to_sde}, which leads to the desired SDE for $t \geq t_0 > 0$ 
\begin{equation}
    dq_t 
    = 
    2 q_t \left( \frac{1 - \frac{\sqrt{2}}{3\pi} q_t^{1/2}}{t} - 1 \right) \, dt 
    + 2 \sqrt{2} q_t \, dB_t \,. 
\end{equation}

\end{proof}



\section{\textcolor{EditBlue}{Discussion}}
\label{sec:discussion}

In this section, we provide some discussion on the potential impact of this work, from both a practical and a theoretical point of view. 

\textbf{Stable Training via Depth Scaling. }
\citet{martens2021rapid} make the key observation that as depth increases, the increased instability of training dynamics is due to the key role of nonlinear activation functions. 
Since then, it is better understood that to achieve stable training in large depth, it is necessary to weaken the nonlinearities of each layer \citep{noci2023shaped,bordelon2023depthwise,yang2023tensor}. 
Our results observe a key connection between two strategies, either weakening the activation function, or weakening the entire layer via skip connections directly. 
From a practical point of view, since both the shaping and ResNet approaches can lead to the same covariance ODE at initialization, we understand that the key to preventing unstable gradients is weakening nonlinearities. 
To choose between the two regimes, it is therefore left to study the role of weakening the weight matrix or otherwise, and how this affects training dynamics. 
In particular, we note that the shaped limit admits feature learning without modifying the scaling \citep{hanin2019finite}, which is fundamentally different than the $\mu$P regime \citep{yang2021featurelearning}. 
%

\textbf{Analysis of Normalization Methods. }
%
%
Since the correlation Markov chain has large jumps, and quickly to a degenerate fixed point, it is intuitive to assume this chain does not admit a continuous time limit, and therefore difficult to analyze from an analytical approach. 
Indeed, this is the approach taken by \cite{meterez2023training}, which yielded one of the first analysis of normalization methods in a deep network. 
However, our result provides a counter-intuitive understanding: it remains possible to analyze seemingly large discrete jumps in a Markov chain if rescaled appropriately. 
In our case, since the Markov chain converges quickly to the fixed point at $\rho=1$, zooming in around the fixed point as a function of time (or layer $\ell$) allows us to view the dynamics at the correct scale. 
This overcomes a previously known technical hurdle, and opens up the possibility of analyzing normalization methods, which is still lacking theoretical progress. 
More specifically, we would like to understand how normalization methods may help or hurt performance in practice, and how to best choose and tune these methods given their many variants, all of which are now made easier by our scaling approach. 
%

\textbf{Foundation for Studying Training Dynamics. }
Until recently, almost all of the work on infinite-depth neural networks remain at initialization. 
This is not due to a lack of attempts, but rather due to a lack of mathematical techniques available. 
In particular, we also emphasize the seminal work of neural tangent kernels for training dynamics \citep{jacot2018neural} is entirely built on the same techniques studying initialization \citep{neal1995bayesian,lee2018deep}. 
For this reason, it is important to slowly yet firmly build up a foundation of theoretical results, that understands as much structure as possible at initialization. 
To this goal, the key distinction from the infinite-width regime is that each layer must be viewed as an infinitesimal discretization of a continuous ``layer time.''
This approach is what helped yield some of the first characterization of training dynamics for infinite-depth ResNets \citep{bordelon2023depthwise,yang2023tensor}. 
In fact, any theory of training dynamics must also account for this infinitesimal treatment of layers, otherwise the limit cannot possibly be stable. 
Therefore, we view this line of work as building towards a theory of training dynamics, and eventually generalization as well.





\bibliography{main}
\bibliographystyle{tmlr}

\vfill
\pagebreak

\appendix

\section{Background on Markov Chain Convergence to SDEs}
\label{app:background_skorohod}

In this section, we will review the main technical results used for Markov chain convergence to SDEs. 
In particular, we will consider piecewise constant interpolations of Markov chains in continuous time defined by $V^{(n)}_{\lfloor tn \rfloor}$, which converges to a continuous process $V_t$. 
Due to measurability concerns of the Markov chain with jumps, in order to define weak convergence on the space of processes, we need to define a precise space along with its equipped topology first introduced by Skorohod. 
We will mostly follow the references \citet{kallenberg2021foundations,ethier2009markov,stroock1997multidimensional} in the rest of this section, and the content will be essentially a rephrasing of Appendix A in \citet{li2022neural}.

To start, we will let $S$ be a complete separable metric space, where the stochastic processes takes value in (in our case, it's the upper triangular entries $\mathbb{R}^{m(m+1)/2}$). 
Let $D_{\mathbb{R}_+, S}$ be the space of c\`adl\`ag functions, i.e. functions that are right continuous with left limits, from $\mathbb{R}_+$ to $S$. 
For $x_n \in \mathbb{R}_+$, we use $x_n \xrightarrow{ul} x$ to denote locally uniform convergence, or uniform convergence on compact subsets of $\mathbb{R}_+$. 
We also consider a class of bijections $\lambda$ on $\mathbb{R}_+$ such that $\lambda$ is strictly increasing, and $\lambda_0 = 0$. 
We define \textbf{Skorohod convergence}, written as $x_n \xrightarrow{s} x$ on $D_{\mathbb{R}_+, S}$, if there exists a sequence of such bijections $\lambda_n$ such that 
\begin{equation}
    \lambda_n \xrightarrow{ul} \text{Id}\,, \quad 
    x_n \circ \lambda_n \xrightarrow{ul} x \,. 
\end{equation}
Intuitively, this allows us to side step any discontinuities in the notion of convergence, as $\lambda_n$ is allowed to perform a ``time change'' so we always land on the ``correct side'' of the discontinuity. 

Importantly, we note that $D_{\mathbb{R}_+, S}$ with the above notion of convergence forms a well behaved probability space. 

\begin{theorem}
[Theorem A5.3, \cite{kallenberg2021foundations}]
For any separable complete metric space $S$, there exists a topology $\mathcal{T}$ on $D_{\bR_+, S}$ such that 
\begin{enumerate}[label=(\roman*)]
    \item $\mathcal{T}$ induces the Skorohod convergence $x_n \xrightarrow{s} x$, 
    \item $D_{\bR_+, S}$ is Polish (separable completely metrizable topological space) under $\mathcal{T}$, 
    \item $\mathcal{T}$ generates the Borel $\sigma$-field generated by the evaluation maps $\pi_t$, $t \geq 0$, where $\pi_t(x) = x_t$.
\end{enumerate}
\end{theorem}

To be fully self-contained, we will also introduce the definitions of a Feller semi-group, generator, and the core. 
Let $S$ be a locally compact separable metric space, and let $C_0 = C_0(S)$ be the space of continuous functions that vanishes at infinity, equipped with the sup norm, hence making $C_0$ a Banach space. 
We say $T:C_0 \to C_0$ is a \textbf{positive contraction operator} if for all $0\leq f \leq 1$, we have that $0 \leq Tf \leq 1$. 
A semi-group $(T_t)$ of positive contraction operators on $C_0$ is called a \textbf{Feller semi-group} if it also satisfies 
\begin{equation}
\begin{aligned}
    &T_t \, C_0 \subset C_0 \,, \quad t \geq 0 \,, \\
    &T_t \, f(x) \to x \,, \text{ as } t \to 0 \,, \quad f \in C_0 \,, x \in S \,. 
\end{aligned}
\end{equation}

Let $\mathcal{D} \subset C_0$ and $A: \mathcal{D} \to C_0$. 
We say the pair $(A, \mathcal{D})$ is a \textbf{generator} of the semi-group $(T_t)$ if $\mathcal{D}$ is the maximal set such that 
\begin{equation}
\label{eq:generator_defn}
    \lim_{t\to 0} \frac{ T_t \, f - f }{ t } = A f \,. 
\end{equation}

We say an operator $A$ with domain $\mathcal{D}$ on a Banach space $B$ is \textbf{closed}, if its graph $G = \{ (f, Af) : f \in \mathcal{D} \}$ is a closed subset of $B \times B$. 
If the closure of $G$ is the graph of an operator $\overline{A}$, we say that $\overline{A}$ is the \textbf{closure} of $A$. 

We say a linear subspace $D \subset \mathcal{D}$ is a \textbf{core} of $A$, if the closure of $A|_D$ is $A$. 
Intuitively, all important properties of $A$ can be recovered via a limit point of $A f_n$, for $f_n \in D$. 
Furthermore, if $(A,\mathcal{D})$ is a geneator of a Feller semi-group, every dense invariant subspace $D \subset \mathcal{D}$ is a core of $A$ \citep[Proposition 17.9]{kallenberg2021foundations}. 
In particular, the core we will work with is the space $C_0^\infty$ of smooth functions that vanishes at infinity. 

The following is a sufficient condition for a semi-group to be Feller. 
\begin{theorem}
[Section 8, Theorem 2.5, \cite{ethier2009markov}]
Let $a^{ij} \in C^2(\mathbb{R}^d)$ with $\di_k \di_\ell a^{ij}$ be bounded for all $i,j,k,\ell \in [d]$, and let $b:\mathbb{R}^d \to \mathbb{R}^d$ be Lipschitz. 
Then the generator defined by the elliptic operator 
\begin{equation}
    A f = \frac{1}{2} \sum_{i,j=1}^d a^{ij} \di_i \di_j f 
        + \sum_{i=1}^d b^i \di_i f \,, 
\end{equation}
generates a Feller semi-group on $C_0$. 
\end{theorem}

Next, we will state a set of equivalent conditions for convergence of Feller processes. 

\begin{theorem}
[Theorem 17.25, \cite{kallenberg2021foundations}]
\label{thm:conv_feller_process}
Let $X, X^1, X^2, X^3, \cdots$ be Feller processes in $S$ with semi-groups $(T_t), (T_{n,t})$ and generators $(A, \mathcal{D}), (A_n, \mathcal{D}_n)$, respectively, and fix a core $D$ for $A$. 
Then these conditions are equivalent: 
\begin{enumerate}[label=(\roman*)]
    \item for any $f \in D$, there exists some $f_n \in \mathcal{D}_n$ with $f_n \to f$ and $A_n f_n \to A f$, 
    \item $T_{n,t} \to T_t$ strongly for each $t > 0$, 
    \item $T_{n,t} f \to T_t f $ for every $f \in C_0$, uniformly for bounded $t>0$, 
    \item $X^n_0 \xrightarrow{d} X_0$ in $S \Rightarrow X^n \xrightarrow{d} X$ in the Skohorod topology of $D_{\bR_+, S}$. 
\end{enumerate}
\end{theorem}

We note that it is common to choose the core $D = C^\infty_0$, and that checking the first condition is sufficient for convergence in the Skorohod topology. 
The next result will help us apply the above criterion to continuous time interpolated Markov chains. 

\begin{theorem}
[Theorem 17.28, \cite{kallenberg2021foundations}]
\label{thm:conv_markov_chain}
Let $Y^1, Y^2, Y^3, \cdots$ be discrete time Markov chains in $S$ with transition operators $U_1, U_2, U_3, \cdots$, and let $X$ be a Feller process with semi-group $(T_t)$ and generator $A$. Fix a core $D$ for $A$, and let $0 < h_n \to 0$. 
Then conditions $(i)-(iv)$ of \cref{thm:conv_feller_process} remain equivalent for the operators and processes 
\begin{equation}
    A_n = h_n^{-1} ( U_n - I ) \,, \quad 
    T_{n,t} = U_n^{\lfloor t / h_n \rfloor} \,, \quad 
    X^n_t = Y^n_{ \lfloor t / h_n \rfloor } \,. 
\end{equation}
\end{theorem}

Here we note that in our applications, we will always choose $h_n = n^{-2p}$, which is essentially dependent on the width of the neural network. 

At this point, we still need to check that the generators $A_n$ converges to $A$ with respect to the core $D = C^\infty_0$. 
For this goal, we will use a lemma from \cite{stroock1997multidimensional}. 
Here we will define $\Pi_n(x,dy)$ to be the Markov transition kernel of $Y^n$, and let  
\begin{equation}
\begin{aligned}
    a^{ij}_n(x) 
    &= \frac{1}{h_n} \int_{|y-x| \leq 1} 
        (y_i - x_i) (y_j - x_j) \, \Pi_n(x, dy) \,, \\ 
    b^i_n(x) 
    &= \frac{1}{h_n} \int_{|y-x| \leq 1} 
        (y_i - x_i) \, \Pi_n(x, dy) \,, \\ 
    \Delta^\epsilon_n(x) 
    &= \frac{1}{h_n} \Pi_n( x, \mathbb{R}^d \setminus B(x,\epsilon) ) \,. 
\end{aligned}
\end{equation}

\begin{lemma}
[Lemma 11.2.1, \cite{stroock1997multidimensional}]
\label{lm:conv_generator}
The following two conditions are equivalent: 
\begin{enumerate}[label=(\roman*)]
    \item For any $R>0, \epsilon>0$ we have that 
    \begin{equation}
        \lim_{n\to\infty} \sup_{|x| \leq R} 
        \| a_n(x) - a(x) \|_{op} + |b_n(x) - b(x)| 
        + \Delta^\epsilon_n(x) 
        = 0 \,,
    \end{equation} 
    \item For each $f \in C^\infty_0(\mathbb{R}^d)$, 
    we have that 
    \begin{equation}
        \frac{1}{h_n} A_n f \to A f \,, 
    \end{equation}
    uniformly on compact sets of $\mathbb{R}^d$, 
    where $A$ is defined as \cref{eq:generator_defn}. 
\end{enumerate}
\end{lemma}

We will also need to weaken the definition slightly for non-Lipschitz coefficients. 
\begin{definition}
\label{def:local_convergence}
We say a sequence of processes $X^n$ \textbf{converge locally} to $X$ in the Skorohod topology if for any $r>0$, we define the following stopping times 
\begin{equation}
\label{eq:stopping_time_defn}
\tau^n \defequal \left\{ t \geq 0 : | X^n_t | \geq r \right\} \,, \quad 
\tau \defequal \left\{ t \geq 0 : | X_t | \geq r \right\} \,, 
\end{equation}
and we have that $X^n_{t \wedge \tau^n}$ converge to $X_{t \wedge \tau}$ in the Skorohod topology. 
\end{definition}

Finally, to summarize everything into a useful form for this paper, we will state the following proposition. 
%
%
\begin{proposition}
[Convergence of Markov Chains to SDE, Proposition A.6, \cite{li2022neural}]
\label{prop:conv_markov_chain_to_sde}
Let $Y^n$ be a discrete time Markov chain on $\mathbb{R}^N$ defined by the following update 
for $p,\delta > 0$
\begin{equation}
    Y^n_{\ell+1} 
    = Y^n_\ell + \frac{\widehat b_n(Y^n_\ell, \omega^n_\ell )}{n^{2p}} 
    + \frac{\sigma_n(Y^n_\ell)}{n^p} \xi^n_\ell 
    + O(n^{-2p-\delta}) \,, 
\end{equation}
where $\xi^n_\ell \in \mathbb{R}^N$ are iid random variables with zero mean, identity covariance, and moments uniformly bounded in $n$. 
Furthermore, $\omega^n_\ell$ are also iid random variables such that 
$\mathbb{E}[ \widehat b_n(Y^n_\ell, \omega^n_\ell) | Y^n_\ell = y ] = b_n(y)$ 
and $\widehat b_n(y, \omega^n_\ell)$ has uniformly bounded moments in $n$. 
Finally, $\sigma_n$ is a deterministic function, and the remainder terms in $O(n^{-2p-\delta})$ have uniformly bounded moments in $n$. 

Suppose $b_n, \sigma_n$ are uniformly Lipschitz functions in $n$ and converges to $b, \sigma$ uniformly on compact sets, 
then in the limit as $n\to\infty$, the process $X^n_t = Y^n_{\lfloor t n^{2p} \rfloor}$ converges in distribution to the solution of the following SDE in the Skorohod topology of $D_{\bR_+, \bR^N}$ 
\begin{equation}
    d X_t = b(X_t) \, dt + \sigma(X_t) \, dB_t \,, 
    \quad 
    X_0 = \lim_{n\to\infty} Y_0^n \,. 
\end{equation}

Suppose otherwise $b_n, \sigma_n$ are only locally Lipschitz (but still uniform in $n$), then $X^n$ converges locally to $X$ in the same topology (see \cref{def:local_convergence}). 
More precisely, for any fixed $r > 0$, we consider the stopping times 
\begin{equation}
    \tau^n \defequal \inf \left\{ t \geq 0 : |X^n_t| \geq r \right\} 
    \,, \quad 
    \tau \defequal \inf \left\{ t \geq 0 : |X_t| \geq r
    \right\} \,, 
\end{equation}
then the stopped process $X^n_{t \wedge \tau^n}$ converges in distribution to the stopped solution $X_{t \wedge \tau}$ of the above SDE in the same topology. 
\end{proposition}

\section{Technical Lemmas for Shaped Activations}
\label{app:lemma_shaped}

Here we will recall and slightly modify a collection of definitions and technical results from Appendix B and C of \citet{li2022neural}, which are related to the shaped activation that we will use in the main theorems. 
To start, we will let $\varphi(x) = \max(x,0)$, and recall the ReLU-like activation function as 
\begin{equation}
    \varphi_s(x) = s_+ \max(x,0) + s_- \min(x,0) = s_+ \varphi(x) - s_- \varphi(x) \,. 
\end{equation}

Let $g \sim \mathcal{N}(0,1)$, then we will restate \citet[Lemma B.3, B.6]{li2022neural}
\begin{equation}
\begin{aligned}
    &\mathbb{E} \, \varphi(g) 
    = \frac{1}{\sqrt{2\pi}} \,, \quad 
    \mathbb{E} \, \varphi(g)^2 
    = \frac{1}{2} \,, \quad 
    \mathbb{E} \, \varphi(g)^4 
    = \frac{3}{2} \,. \\ 
    &\mathbb{E} \, \varphi_s(g) 
    = 
        \frac{s_+ - s_-}{\sqrt{2\pi}} \,, \quad
    \mathbb{E} \, \varphi_s(g)^2 
    = 
        \frac{s_+^2 + s_-^2}{2} \,, \quad 
    \mathbb{E} \, \varphi_s(g)^4 
    = 
        \frac{3}{2}( s_+^4 + s_-^4 ) \quad 
        \,. 
\end{aligned}
\end{equation}

We will also recall the definitions  
\begin{equation}
\begin{aligned}
    \bar J_{p,r}(\rho) \defequal \mathbb{E} \, \varphi(g)^p \varphi(\hat g)^r \,, \quad 
    K_{p,r}(\rho) \defequal \mathbb{E} \, \varphi_s(g)^p \varphi_s(\hat g)^r \,, 
\end{aligned}
\end{equation}
where $g, w$ are iid $\cN(0,1)$ and we define $\hat g = \rho g + qw$ with $q = \sqrt{1-\rho^2}$.
We will also use the short hand notation to write 
$\bar J_p := \bar J_{p,p}, K_p := K_{p,p}$. 

Here we recall from \cite{cho2009kernel} and \citet[Lemma B.7]{li2022neural} that 
\begin{equation}
\begin{aligned}
    \bar J_1(\rho) = \frac{ \sqrt{1-\rho^2} + (\pi - \arccos \rho) \rho }{ 2\pi } \,, 
    \quad
    K_1(\rho) = 
        (s_+^2 + s_-^2) \bar J_1(\rho) - 2 s_+ s_- \bar J_1(- \rho) \,. 
\end{aligned}
\end{equation}

Next we will slightly modify a Taylor expansion result from \cite[Lemma C.1]{li2022neural} 
\begin{lemma}
[Taylor Expand Shaping Correlation]
\label{lm:taylor_shape_corr}
Recall $\varphi_s(x) = s_+ \max(x,0) + s_- \min(x,0)$. 
Let $s_\pm = 1 + \frac{c_\pm}{n^p}$ for $p>0$, then we have the following Taylor expansion 
\begin{equation}
    c K_1(\rho) = \rho + \frac{\nu(\rho)}{n^{2p}} + O(n^{-3p}) \,, 
    \quad 
    \nu(\rho) = \frac{(c_+ - c_-)^2}{2\pi} \left( \sqrt{1-\rho^2} + \rho \arccos\rho \right) \,. 
\end{equation}
\end{lemma}
\begin{proof}

We start by expanding the formula for $K_1(\rho)$ to get 
\begin{equation}
    c K_1(\rho) 
    = 
    \frac{2}{s_+^2 + s_-^2} 
    \frac{1}{2\pi} 
    \left( 
    (s_+^2 + s_-^2) 
    \left( \sqrt{1-\rho^2} + (\pi - \arccos \rho) \rho \right)
    - 2 s_+ s_- 
    \left( \sqrt{1-\rho^2} - (\arccos \rho) \rho \right)
    \right)
    \,. 
\end{equation}

At this point, we can plug in $s_\pm = 1 + \frac{c_\pm}{ n^p }$, and Taylor expanding gives us 
\begin{equation}
\begin{aligned}
    c K_1(\rho) 
    &= 
    \frac{\rho \arccos{\left(\rho \right)}}{\pi} + \frac{\rho \left(\pi - \arccos{\left(\rho \right)}\right)}{\pi} 
    \\ 
    & \quad + 
    \left(n^{-p}\right)^{2} 
    \Bigg(
    \frac{- \rho c_{+}^{2} \arccos{\left(\rho \right)} + 2 \rho c_{+} c_{-} \arccos{\left(\rho \right)} - \rho c_{-}^{2} \arccos{\left(\rho \right)} }{2\pi} 
    \\ 
    &\qquad\qquad\qquad\qquad + 
    \frac{
    c_{+}^{2} \sqrt{1 - \rho^{2}} - 2 c_{+} c_{-} \sqrt{1 - \rho^{2}} + c_{-}^{2} \sqrt{1 - \rho^{2}}}{2 \pi}
    \Bigg) 
    \\ 
    & \quad + 
    O\left(\left(n^{-p}\right)^{3}\right)
    \,. 
\end{aligned}
\end{equation}

Simplifying the expressions gives us the desired result of 
\begin{equation}
    cK_1(\rho) = \rho + \frac{\nu(\rho)}{n^{2p}} + O(n^{-3p}) \,. 
\end{equation}

\end{proof}

Similar to \citet[Lemma C.2]{li2022neural}, we can also approximate the fourth moment of shaped activation functions via the fourth moment of Gaussians. 

\begin{lemma}
[Fourth Moment Approximation]
\label{lm:fourth_moment_approx}
Considers the jointly Gaussian random variables 
\begin{equation}
    [g^\alpha]_{\alpha=1}^4 
    \sim 
    \mathcal{N} \left( 0 \,, \, [ \rho^{\alpha\beta} ]_{\alpha,\beta=1}^4 \right) \,, 
\end{equation}
where $\rho^{\alpha\alpha} = 1$ for all $\alpha$. 
Let $\varphi_s(x) = s_+ \max(x,0) + s_- \min(x,0)$ with coefficients $s_\pm = 1 + \frac{c_\pm}{n^p}$, then we have 
\begin{equation}
    \mathbb{E} \, \prod_{\alpha=1}^4 \varphi_s(g^\alpha) 
    = 
    \mathbb{E} \, \prod_{\alpha=1}^4 g^\alpha + O(n^{-p}) 
    = 
    \rho^{12} \rho^{34} 
    + \rho^{13} \rho^{24} 
    + \rho^{14} \rho^{23} 
    + O(n^{-p}) \,. 
\end{equation}
\end{lemma}

\begin{proof}

Observe that we can write $\varphi_s(x)$ as a perturbation of identity 
\begin{equation}
    \varphi_s(x) = x + \frac{1}{n^p} \left( c_+ \varphi(x) - c_- \varphi(-x) \right) 
    = x + O(n^{-p}) \,. 
\end{equation}

This means we can approximate the product of shaped activations without the activation 
\begin{equation}
    \prod_{\alpha=1}^4 \varphi_s(g^\alpha) 
    = \prod_{\alpha=1}^4 g^\alpha + O(n^{-p}) \,. 
\end{equation}

Finally, we can use the Isserlis Theorem to write 
\begin{equation}
    \mathbb{E} \, \prod_{\alpha=1}^4 g^\alpha 
    = 
    \mathbb{E} \, g^1 g^2 \mathbb{E} \, g^3 g^4 
    + \mathbb{E} \, g^1 g^3 \mathbb{E} \, g^2 g^4 
    + \mathbb{E} \, g^1 g^4 \mathbb{E} \, g^2 g^3 
    = \rho^{12} \rho^{34} 
    + \rho^{13} \rho^{24} 
    + \rho^{14} \rho^{23} \,, 
\end{equation}
which is the desired result. 

\end{proof}

Furthermore, we can characterize the covariance of the following random variables 
\begin{equation}
    R^{\alpha\beta} = \frac{1}{\sqrt{n}} \sum_{i=1}^n \left[ c \varphi_s(g^\alpha_i) \varphi_s(g^\beta_i) - cK_1(\rho^{\alpha\beta}) \right] \,, 
\end{equation}
where the random vector $[g^\alpha_i, g^\beta_i]$ are iid copies of the same Gaussian vector. 
This follows from a modification of \citet[Lemma C.3]{li2022neural}. 

\begin{lemma}
[Covariance of $R^{\alpha\beta}$]
\label{lm:cov_r_ab}
Let $R^{\alpha\beta}$ be defined as above. Then if $s_\pm = 1 + \frac{c_\pm}{n^p}$ for the shaped activation, we have that 
\begin{equation}
    \mathbb{E} \, R^{\alpha\beta} R^{\gamma\delta} = \rho^{\alpha\gamma} \rho^{\beta\delta} + \rho^{\alpha\delta} \rho^{\beta\gamma} + O(n^{-p}) \,. 
\end{equation}
\end{lemma}

\begin{proof}

Firstly, we note since each entry of the sum in $R^{\alpha\beta}$ are iid, we will only need to compute the moments for a single entry. 
This means 
\begin{equation}
    \mathbb{E} \, R^{\alpha\beta} R^{\gamma\delta} 
    = \mathbb{E} \, c^2 
    \left( \varphi_s( g^\alpha_{i} ) 
    \varphi_s( g^\beta_{i} ) 
    - K_1( \rho^{\alpha\beta} )
    \right) \, 
    \left( \varphi_s( g^\gamma_{i} ) 
    \varphi_s( g^\delta_{i} ) 
    - K_1( \rho^{\gamma\delta} )
    \right) \,. 
\end{equation}

At this point, we observe that $c = 1 + O(n^{-p})$, $K_1(\rho) = \rho + O(n^{-2p})$, and we can write 
\begin{equation}
    \mathbb{E} \, R^{\alpha\beta} R^{\gamma\delta} 
    = 
    \mathbb{E} 
    \left( \varphi_s( g^\alpha_{i} ) 
    \varphi_s( g^\beta_{i} ) 
    - \rho^{\alpha\beta} 
    \right) \, 
    \left( \varphi_s( g^\gamma_{i} ) 
    \varphi_s( g^\delta_{i} ) 
    - \rho^{\gamma\delta} 
    \right) 
    + 
    O(n^{-p}) \,. 
\end{equation}

Finally, the fourth moment approximation \cref{lm:fourth_moment_approx} gives us the desired result 
\begin{equation}
\begin{aligned}
    \mathbb{E} \, R^{\alpha\beta} R^{\gamma\delta} 
    &= 
    \rho^{\alpha\beta} \rho^{\gamma\delta} 
    + \rho^{\alpha\gamma} \rho^{\beta\delta} 
    + \rho^{\alpha\delta} \rho^{\beta\gamma} 
    - \rho^{\alpha\beta} \rho^{\gamma\delta} 
    - \rho^{\alpha\beta} \rho^{\gamma\delta} 
    + \rho^{\alpha\beta} \rho^{\gamma\delta} 
    + O(n^{-p}) 
    \\ 
    &= \rho^{\alpha\gamma} \rho^{\beta\delta} 
    + \rho^{\alpha\delta} \rho^{\beta\gamma} 
    + O( n^{-p} ) \,. 
\end{aligned}
\end{equation}

\end{proof}


\end{document}